  \setlist{leftmargin=*,noitemsep}
\newcommand{\klloss}{\mathfrak{L}}
\def\heta{\widehat{\eta}}
\def\hf{{\hat f}}
\def\hP{\widehat{P}}
\def\hp{\widehat{p}}
\def\hQ{\widehat{Q}}
\def\hq{\widehat{q}}
\def\htheta{\widehat{\theta}}
\def\sf{{f^\star}}
\def\tf{{\tilde f}}
\def\htheta{{\hat \theta}}
\def\hL{\hat L}
\def\sL{{L^\star}}
\newcommand{\breeds}{\textsc{Breeds}}
\newcommand{\waterbirds}{\textsc{Waterbirds}}
\newcommand{\strongconvexity}{\mu}
\newcommand{\nsource}{{n_P}}
\newcommand{\ntarget}{{n_Q}}
\newcommand{\weight}{\omega}
\newcommand{\coef}{\xi}
\newcommand{\scoef}{\coef^\star}
\newcommand{\hcoef}{{\hat \coef}}
\title{Understanding new tasks through the lens of training data via exponential tilting}
\author{%
  	Subha Maity \\
	Department of Statistics\\
	University of Michigan\\
	\texttt{smaity@umich.edu}\\
	\And
	Mikhail Yurochkin \\
	IBM Research\\
	MIT-IBM Watson AI lab\\
	\texttt{mikhail.yurochkin@ibm.com}\\
	\And
	Moulinath Banerjee \\
	Department of Statistics\\
	University of Michigan\\
	\texttt{moulib@umich.edu}\\
	\AND
	Yuekai Sun \\
	Department of Statistics\\
	University of Michigan\\
	\texttt{yuekai@umich.edu}
  % examples of more authors
  % \And
  % Coauthor \\
  % Affiliation \\
  % Address \\
  % \texttt{email} \\
  % \AND
  % Coauthor \\
  % Affiliation \\
  % Address \\
  % \texttt{email} \\
  % \And
  % Coauthor \\
  % Affiliation \\
  % Address \\
  % \texttt{email} \\
  % \And
  % Coauthor \\
  % Affiliation \\
  % Address \\
  % \texttt{email} \\
}
\begin{document}

\maketitle

\begin{abstract}
% A basic problem in modern machine learning (ML) is evaluating the performance of models under distribution shift. We present a method for evaluating the out-of-distribution risk of ML models under subpopulation shift. We assume the learner has access to a labeled validation dataset and an unlabeled dataset from the target domain, and develop a method for learning the importance weight function between the validation and target distribution. We demonstrate the efficacy of the learned weight function by using it to perform domain adaptation and model selection.
Deploying machine learning models on new tasks is a major challenge due to differences in distributions of the train (source) data and the new (target) data. However, the training data likely captures some of the properties of the new task. We consider the problem of reweighing the training samples to gain insights into the distribution of the target task.
Specifically, we formulate a distribution shift model based on the exponential tilt assumption and learn train data importance weights minimizing the KL divergence between labeled train and unlabeled target datasets. The learned train data weights can then be used for downstream tasks such as target performance evaluation, fine-tuning, and model selection. We demonstrate the efficacy of our method on \waterbirds\ and \breeds\ benchmarks.
\footnote{Codes can be found in \href{https://github.com/smaityumich/exponential-tilting}{https://github.com/smaityumich/exponential-tilting}.}

\end{abstract}

\section{Introduction}

Machine learning models are often deployed in a target domain that differs from the domain in which they were trained and validated in. This leads to the practical challenges of adapting and evaluating the performance of models on a new domain without costly labeling of the dataset of interest. For example, in the Inclusive Images challenge \citep{shankar2017No}, the training data largely consists of images from countries in North America and Western Europe.
If a model trained on this data is presented with images from countries in Africa and Asia, then (i) it is likely to perform poorly, and (ii) its performance in the training (source) domain may not mirror its performance in the target domain. 
However, due to the presence of a small fraction of images from Africa and Asia in the source data, it may be possible to reweigh the source samples to mimic the target domain.

In this paper, we consider the problem of learning a set of importance weights so that the reweighted source samples closely mimic the distribution of the target domain. We pose an exponential tilt model of the distribution shift between the train and the target data and an accompanying method that leverages unlabeled target data to fit the model. Although similar methods are widely used in statistics \cite{rosenbaum1983central} and machine learning \cite{sugiyama2012Density} to train and evaluate models \emph{under covariate shift} (where the decision function/boundary does not change), one of the main benefits of our approach is it allows \emph{concept drift} (where the decision boundary/function are expected to differ) \citep{cai2019Transfer,gama2014survey} between the source and the target domains.
We summarize our contributions below:
\begin{itemize}
\item In Section \ref{sec:exponential-tilt} we develop a model and an accompanying method for learning source importance weights to mimic the distribution of the target domain \emph{without} labeled target samples.
\item In Section \ref{sec:theory} we establish theoretical guarantees on the quality of the weight estimates and their utility in the downstream tasks of fine-tuning and model selection.
\item We demonstrate applications of our method on \waterbirds\ \citep{sagawa2019Distributionally} (Section \ref{sec:waterbird}),  \breeds\ \citep{santurkar2020breeds} (Section \ref{sec:breeds}) and synthetic (Appendix \ref{sec:normal-mixture}) datasets.  
\end{itemize}

\vspace{-0.4cm}
\section{Related work}

\textbf{Out-of-distribution generalization} is essential for safe deployment of ML models. There are two prevalent problem settings: domain generalization and subpopulation shift \citep{koh2020WILDS}. Domain generalization typically assumes access to several datasets during training that are related to the same task, but differ in their domain or environment \citep{blanchard2011Generalizing,muandet2013Domain}. The goal is to learn a predictor that can generalize to unseen related datasets via learning invariant representations \citep{ganin2016Domainadversarial,sun2016Deep}, invariant risk minimization \citep{arjovsky2019Invariant,krueger2021out}, or meta-learning \citep{dou2019Domain}.
Domain generalization is a very challenging problem and recent benchmark studies demonstrate that corresponding methods rarely improve over vanilla empirical risk minimization (ERM) on the source data unless given access to labeled target data for model selection \citep{gulrajani2020Search,koh2020WILDS}.

Subpopulation shift setting assumes that both train and test data consist of the same groups with different group fractions. This setting is typically approached via distributionally robust optimization (DRO) to maximize worst group performance \citep{duchi2016Statistics,sagawa2019Distributionally}, various reweighing strategies \citep{shimodaira2000Improving,byrd2019What,sagawa2020Investigation,idrissi2021simple}. These methods require group annotations which could be expensive to obtain in practice. Several methods were proposed to sidestep this limitation, however they still rely on a validation set with group annotations for model selection to obtain good performance \citep{hashimoto2018Fairness,liu2021Just,zhai2021DORO,creager2021Environment}. Our method is most appropriate for the subpopulation shift setting (see Section \ref{sec:exponential-tilt}), however it differs in that it does not require group annotations, but requires unlabeled target data.

\textbf{Model selection} on out-of-distribution (OOD) data is an important and challenging problem as noted by several authors \citep{gulrajani2020Search,koh2020WILDS,zhai2021DORO,creager2021Environment}.
\citet{xu2022estimation,chen2021mandoline} propose solutions specific to covariate shift based on parametric bootstrap and reweighing; \citet{garg2022leveraging,guillory2021predicting,yu2022predicting} align model confidence and accuracy with a threshold; \citet{jiang2021assessing,chen2021detecting} train several models and use their ensembles or disagreement. Our importance weighting approach is computationally simpler than the latter and is more flexible in comparison to the former, as it allows for concept drift and can be used in downstream tasks beyond model selection as we demonstrate both theoretically and empirically.

\textbf{Domain adaptation} is another closely related problem setting. Domain adaptation (DA) methods require access to labeled source and unlabeled target domains during training and aim to improve target performance via a combination of distribution matching \citep{ganin2016Domainadversarial,sun2016Deep,shen2018wasserstein}, self-training \citep{shu2018DIRTT,kumar2020Understanding}, data augmentation \citep{cai2021Theory,ruan2021optimal}, and other regularizers. DA methods are typically challenging to train and require retraining for every new target domain. On the other hand, our importance weights are easy to learn for a new domain allowing for efficient fine-tuning, similar to test-time adaptation methods \citep{sun2020TestTime,wang2020Tent,zhang2020Adaptive}, which adjust the model based on the target unlabeled samples. Our importance weights can also be used to define additional regularizers to enhance existing DA methods.

\textbf{Importance weighting} has often been used in the domain adaptation literature on label shift \citep{lipton2018Detecting,azizzadenesheli2019Regularized,maity2022minimax} and covariate shift \citep{sugiyama2007covariate,hashemi2018weighted} but the application has been lacking in the area of concept drift models \citep{cai2019Transfer,maity2021linear}, due to the reason that it is generally impossible to estimate the  weights without seeing labeled data from the target. In this paper, we introduce an exponential tilt model which accommodates concept drift while allowing us to estimate the importance weights for the distribution shift. 
\section{The exponential tilt model}
\label{sec:exponential-tilt}
\paragraph{Notation} We consider a $K$-class classification problem. Let $\cX \in \reals^d$ and $\cY\triangleq[K]$ be the space of inputs and set of possible labels, and $P$ and $Q$ be probability distributions on $\cX\times\cY$ for the source and target domains correspondingly.
% We consider $P$ as the distribution of samples in the source domain and $Q$ as that in the target domain.
A (probabilistic) classifier is a map $f:\cX\to\Delta^{K-1}$.
% , where $\Delta^{K-1}$.
% is the probability simplex in $\reals^K$.
We define $p\{x, Y = k\}$ as the weighted source class conditional density, \ie\ $p\{x, Y = k\} = p\{x\mid Y = k\} \times P\{Y = k\}$,  where $p\{x \mid Y = k\}$ is the density of the source feature distribution in class $k$ and $P\{Y = k\}$ is the class probability in source. We similarly define $q\{x , Y = k\}$ for target. 

We consider the problem of learning importance weights on samples from a source domain so that the weighted source samples mimic the target distribution. We assume that the learner has access to labeled samples $\{(X_{P,i},Y_{P,i})\}_{i=1}^{n_P}$ from the source domain and and unlabeled samples $\{X_{Q,i}\}_{i=1}^{n_Q}$ from the target domain. The learner's goal is to estimate a weight function $\weight(x, y)>0$ such that
\begin{equation}
\Ex\left[\weight(X_P,Y_P)g(X_P,Y_P)\right] \approx \Ex\big[g(X_Q,Y_Q)\big]\text{ for all (reasonable) }g: \cX \times \cY \to \reals.
\label{eq:true-weighted-expectation}
\end{equation}
Ideally, $\weight = \frac{dQ}{dP}$ is the likelihood ratio between the source and target domains (this leads to equality in \eqref{eq:true-weighted-expectation}), but learning this weight function is generally impossible without labeled samples from the target domain \citep{david2010Impossibility}. Thus we must impose additional restrictions on the domains.

\paragraph{The exponential tilt model} We assume that there is a vector of sufficient statistics $T:\cX \to \reals^p$ and the parameters \{$\theta_k \in \reals^p$, $\alpha_k\in \reals\}_{k=1}^K$ such that 
\begin{equation}
\textstyle\log\frac{q\{x,Y=k\}}{p\{x,Y=k\}} = \theta_k^\top T(x) + \alpha_k\text{ for all }k\in[K];
\label{eq:exponential-tilt}
\end{equation}
\ie\ $q\{x,Y=k\}$ is a member of the exponential family with base measure $p\{x,Y=k\}$ and sufficient statistics $T$. 
We call \eqref{eq:exponential-tilt} the \textbf{exponential tilt} model. It implies the importance weights between the source and target samples are
\[
\weight(x,y) = \exp(\theta_y^\top T(x) +  \alpha_y).
\]
\paragraph{Model motivation}
The exponential tilt model is motivated by the rich theory of exponential families in statistics. In machine learning, it was used for learning with noisy labels and for improving worst-group performance when group annotations are available \citep{li2020tilted,li2021tilted}. It is also closely related to several common models in transfer learning and domain adaptation. In particular, it implies there is a linear concept drift between the source and target domains. It also extends the widely used \textbf{covariate shift} \citep{sugiyama2012Machine} and \textbf{label shift} models \citep{alexandari2020EM,lipton2018Detecting,azizzadenesheli2019Regularized,maity2022minimax,garg2020Unified} of distribution shifts. It extends the covariate shift model because the exponential tilt model permits (linear in $T(X)$) \textbf{concept drifts} between the source and target domains; it extends the label shift model because it allows the class conditionals to differ between the source and target domains. It does, however, come with a limitation: implicit in the model is the assumption that there is some amount of overlap between the source and target domains. In the subpopulation shift setting, this assumption is always satisfied, while in domain generalization it may be violated if the new domain drastically differs from the source data (see Appendix \ref{sec:normal-mixture} for a synthetic data example).

\paragraph{Choosing $T$} 
% We comment on the choice of the sufficient statistics $T$ under the subpopulation shift setting. 
The goal of $T$ is to identify the common subpopulations across domains, such that 
\[(X_P,Y_P)\mid\{T(X_P) = t, Y_P = k\}\overset{d}{\approx}(X_Q,Y_Q)\mid\{T(X_Q) = t, Y_P = k\}.\]
If $T$ segments the source domain into its subpopulations (\ie\ the subpopulations are $\{(x, y)\in\cX\times \cY\mid T(x) = t, y = k\}$ for different values of $t$'s and $k$'s), then it is possible to achieve perfect reweighing of the source domain with the exponential tilt model: the weight of the $\{T(X) = t, Y = k\}$ subpopulation is $\exp(\theta_k^\top t+\alpha_k)$. However, in practice, such a $T$ that perfectly segments the subpopulations may not exist (\eg\ the subpopulations may overlap) or is very hard to learn (\eg\ we don't have prior knowledge of the subpopulations to guide $T$).

If no prior knowledge of the domains is available, we can use a neural network to parameterize $T$ and learn its weights along with the tilt parameters, or simply use a pre-trained feature extractor as $T$, 
which we demonstrate to be sufficiently effective in our empirical studies.
We also study the effects of misspecification of $T$ using a synthetic dataset example in Appendix \ref{sec:normal-mixture}.

% \input{sections/exponential-tilt}
% \section{Learning importance weights}
% \label{sec:importance-weight-learning}

% We learn the importance weights using our exponential tilt model \eqref{eq:exponential-tilt}. We use the labeled source data and unlabeled target data to  fit the exponential tilt model and get estimates for the tilt parameters, say $\{\hcoef_k\}_{k = 1}^K $. Then we estimate the importance weight for a generic source sample point $(x, y)$ as $\hat w(x, y) = \exp(\hcoef_y^\top S(x))$. 

% At a high-level, the proposed method for domain adaptation consists of two steps:
% \begin{enumerate}
% \item \textbf{fit exponential tilt model:} estimate the parameters $\theta_k$'s in the exponential tilt model by matching the tilted (marginal) distribution of the inputs from the source domain with that of the inputs from the target domain;
% \item \textbf{train classifier for target domain:} construct/train a classifier for the target domain by plugging in the estimates of $q\{x,Y=k\}$ into \eqref{eq:target-regression-function} or weighted empirical risk minimization (ERM).
% \end{enumerate}
% We elaborate on the two steps in the rest of this section and summarize the method in algorithm \ref{alg:DAET}.

\paragraph{Fitting the exponential tilt model} 
We fit the exponential tilt model via distribution matching. This step is based on the observation that under the exponential tilt model \eqref{eq:exponential-tilt}
\begin{equation} \textstyle
q_X\{x\} = \sum_{k=1}^Kp\{x,Y=k\}\exp(\theta_k^\top T(x) + \alpha_k),
\label{eq:distribution-matching}
\end{equation}
where $q_X$ is the (marginal) density of the inputs in the target domain. It is possible to obtain an estimate $\hq_X$ of $q_X$ from the unlabeled samples $\{X_{i, Q}\}_{i=1}^{n}$ and estimates $\hp\{x,Y=k\}$ of the $p\{x,Y=k\}$'s from the labeled samples $\{(X_{i, P},Y_{i, P})\}_{i=1}^{m}$. This suggests we find $\theta_k$'s and $\alpha_k$'s such that  \[ \textstyle
\sum_{k=1}^K\hp\{x,Y=k\}\exp(\theta_k^\top T(x) + \alpha_k) \approx \hq_X\{x\}.
\] 
Note that the $\theta_k$'s and $\alpha_k$'s are dependent because $\widehat{q}_X$ must integrate to one. We enforce this restriction as a constraint in the distribution matching problem:
\begin{equation}
\{(\htheta_k, \hat \alpha_k)\}_{k=1}^K \in \left\{
\begin{aligned}
& \argmin_{\{(\theta_k, \alpha_k)\}_{k=1}^K}D\left(\textstyle\hq_X\{x\}\|\sum_{k=1}^K\hp\{x,Y=k\}\exp(\theta_k^\top T(x) + \alpha_k)\right)\\
& \text{subject to }\textstyle \int_{\cX} \sum_{k=1}^K\hp\{x,Y=k\}\exp(\theta_k^\top T(x) + \alpha_k)dx = 1\,, 
\end{aligned}
\right.
\label{eq:robust-distribution-matching}
\end{equation}
where $D$ is a discrepancy between probability distributions on $\cX$. Although there are many possible choices of $D$, we pick the Kullback-Leibler (KL) divergence in the rest of this paper because it leads to some computational benefits. We reformulate the above optimization for KL-divergence to relax the constraint which we state in the following lemma.  

\begin{lemma}
\label{lemma:KL-distribution-matching}
If $D$ is the Kullback-Leibler (KL) divergence then optima in \eqref{eq:distribution-matching} is achieved at $\{(\htheta_k, \hat\alpha_k)\}_{k = 1}^K$ where 
\[
\begin{aligned}
\textstyle\{(\htheta_k, \hat \alpha_k')\}_{k=1}^K \in  \argmax_{\{(\theta_k, \alpha_k')\}_{k=1}^K}\Ex_{\hQ_X} \left[\textstyle\log\Big\{\sum_{k=1}^K\heta_{P,k}(X)\exp(\theta_k^\top T(X) + \alpha_k'\Big\}\right]\\
 - \textstyle\log \big\{\Ex_{\hP} \big[ \exp(\theta_Y^\top T(X) + \alpha_Y') \big]\big\}
\end{aligned}\] $\heta_{P} = \{\heta_{P, k}\}_{k = 1}^K$ is a probabilistic classifier for $P$ and $\hat \alpha_k = \hat \alpha_k' - \log \big\{\Ex_{\hP} \big[ \exp(\htheta_Y^\top T(X) +\hat  \alpha_Y') \big]\big\}$. 
\end{lemma}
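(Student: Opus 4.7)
My plan is to reduce the constrained problem \eqref{eq:robust-distribution-matching} to the unconstrained form in the statement by two standard moves: expand the KL objective and drop parameter-free terms, then reparameterize the intercepts so that the normalization constraint is automatically satisfied.

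First, write $\tilde q(x) = \sum_{k=1}^K \hp\{x, Y=k\}\exp(\theta_k^\top T(x) + \alpha_k)$. Since $D(\hq_X\,\|\,\tilde q) = -H(\hq_X) - \Ex_{\hQ_X}[\log\tilde q(X)]$, minimizing the KL over the parameters is equivalent to maximizing $\Ex_{\hQ_X}[\log\tilde q(X)]$. Factoring $\hp\{x, Y=k\} = \hp_X(x)\heta_{P,k}(x)$, where $\heta_{P,k}(x) = \hP\{Y=k\mid X=x\}$, the term $\log\hp_X(X)$ is parameter-free and may be dropped, leaving the reduced objective $\Ex_{\hQ_X}\!\big[\log\sum_{k}\heta_{P,k}(X)\exp(\theta_k^\top T(X) + \alpha_k)\big]$. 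Next, rewrite the constraint: since $\int f(x)\hp\{x, Y=k\}\,dx = \Ex_{\hP}[f(X)\mathbbm{1}\{Y=k\}]$,
\[
\int_{\cX}\tilde q(x)\,dx = \Ex_{\hP}\big[\exp(\theta_Y^\top T(X) + \alpha_Y)\big],
\]
so the constraint reads $\Ex_{\hP}[\exp(\theta_Y^\top T(X) + \alpha_Y)] = 1$.

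I then relax the constraint by reparameterizing. For any $\{(\theta_k, \alpha_k')\}$, define $c(\theta, \alpha') := \log\Ex_{\hP}[\exp(\theta_Y^\top T(X) + \alpha_Y')]$ and set $\alpha_k := \alpha_k' - c(\theta, \alpha')$. A direct check shows this $(\theta, \alpha)$ is feasible, and substitution into the reduced objective yields
\[
\Ex_{\hQ_X}\Big[\log\sum_{k}\heta_{P,k}(X)\exp(\theta_k^\top T(X) + \alpha_k')\Big] - c(\theta, \alpha'),
\]
which is exactly the unconstrained objective in the lemma. Conversely, every feasible $(\theta, \alpha)$ arises from the choice $\alpha_k' = \alpha_k$ (for which $c = 0$), and the unconstrained objective is invariant under the gauge $\alpha_k' \mapsto \alpha_k' + a$, matching the identification that only $\alpha_k' - c(\theta,\alpha')$ is recovered. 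Maximizing the unconstrained expression in $(\theta, \alpha')$ and applying the shift $\hat\alpha_k = \hat\alpha_k' - c(\htheta, \hat\alpha')$ therefore returns an optimum of \eqref{eq:robust-distribution-matching}.

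The only step that needs care is the correspondence argument in the last paragraph: verifying the reparameterization is surjective onto the feasible set, that the objective value is preserved under the shift, and that the prescribed map on optimizers goes in both directions. The remaining pieces are routine manipulations using the product structure of $\hp\{x, Y=k\}$ and the form of the exponential tilt.
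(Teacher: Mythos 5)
Your proposal is correct and follows essentially the same route as the paper's proof: expand the KL divergence and drop the parameter-free terms via the factorization $\hp\{x,Y=k\}=\hp_X(x)\heta_{P,k}(x)$, identify the constraint as $\Ex_{\hP}[\exp(\theta_Y^\top T(X)+\alpha_Y)]=1$, and eliminate it by the intercept shift $\alpha_k=\alpha_k'-\log\Ex_{\hP}[\exp(\theta_Y^\top T(X)+\alpha_Y')]$, checking that this map is onto the feasible set and preserves the objective. The correspondence argument you flag as needing care is exactly the $F_1=F_2$ verification the paper carries out, and your sketch of it (feasible points are fixed points of the reparameterization, and the shifted objective is gauge-invariant) is the right one.
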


One benefit of minimizing the KL divergence is that the learner does not need to estimate the $p\{x,Y=k\}$'s, a generative model that is difficult to train.  They merely need to train a discriminative model to estimate $\heta_P$ from the (labeled) samples from the source domain. 

We plug the fitted $\htheta_k$'s and $\hat \alpha_k$'s into \eqref{eq:weights-estimate} to obtain Exponential Tilt Reweighting Alignment (ExTRA) importance weights:
\begin{equation}
\widehat{\weight}(x,y) = \exp(\htheta_y^\top T(x) + \hat \alpha_y).
\label{eq:weights-estimate}
\end{equation}
We summarize the ExTRA procedure in Algorithm \ref{alg:extra} in Appendix \ref{sup:exp:model}.

Next we describe two downstream tasks where ExTRA weights can be used: 
\begin{enumerate}
\item \textbf{ExTRA model evaluation in the target domain.}
% Armed with estimates of the importance weights, 
Practitioners may estimate the target performance of a model in the target domain by reweighing the empirical risk in the source domain:
\begin{equation}
   \textstyle \label{eq:target-performance}
\Ex\left[\ell(f(X_Q),Y_Q)\right] \approx \frac{1}{n_P}\sum_{i=1}^{n_P}\ell(f(X_{P,i}),Y_{P,i})\widehat{\weight}(X_{P,i}, Y_{P,i}), 
\end{equation}
where $\ell$ is a loss function. This allows to evaluate models in the target domain without target labeled samples \emph{even in the presence of concept drift between the training and target domain}.
\item \textbf{ExTRA fine-tuning for target domain performance.} Since the reweighted empirical risk (in the source domain) is a good estimate of the risk in the target domain, practitioners may fine-tune models for the target domain by minimizing the reweighted empirical risk:
\begin{equation}
\widehat{f}_Q\in\argmin_{f\in\cF}\Ex_{\hP}\left[\ell(f(X),Y)\widehat{\weight}(X, Y)\right].
\label{eq:weighted-ERM}
\end{equation}
\end{enumerate}

We note that the correctness of \eqref{eq:robust-distribution-matching} depends on the identifiability of the $\theta_k$'s and $\alpha_k$'s from  \eqref{eq:distribution-matching}; \ie\ the uniqueness of the parameters that satisfy \eqref{eq:distribution-matching}. As long as the tilt parameters are identifiable, then \eqref{eq:robust-distribution-matching} provides consistent estimates of them. However, without additional assumptions on the $p\{x,Y=k\}$'s and $T$, the tilt parameters are generally unidentifiable from \eqref{eq:distribution-matching}. Next we elaborate on the identifiability of the exponential tilt model.

\section{Theoretical properties of exponential tilting}
\label{sec:theory}
\subsection{Identifiability of the exponential tilt model}
\label{sec:identifiability}
To show that the $\theta_k$'s and $\alpha_k$'s are identifiable from \eqref{eq:distribution-matching}, we must show that there is a unique solution to \eqref{eq:distribution-matching}. Unfortunately, this is not always the case. For example, consider a linear discriminant analysis (LDA) problem in which the class conditionals drift between the source and target domains:
\[
\begin{aligned}
p\{x,Y=k\} &= \pi_k\phi(x-\mu_{P,k}), ~~~
q\{x,Y=k\} &= \pi_k\phi(x-\mu_{Q,k})\,,
\end{aligned}
\]
where $\phi$ is the standard multivariate normal density, $\pi_k\in(0,1)$ are the class proportions in both source and target domains, and $\mu_{P,k}$'s (resp.\ the $\mu_{Q,k}$'s) are the class conditional means in the source (resp.\ target) domains. We see that this problem satisfies the exponential tilt model with $T(x) = x$:
\[ \textstyle
\log\frac{q\{x,Y=k\}}{p\{x,Y=k\}} = (\mu_{Q,k} - \mu_{P,k})^\top x - \frac12\|\mu_{Q,k}\|_2^2 +\frac12\|\mu_{P,k}\|_2^2\,.
\]
This instance of the exponential tilt model is not identifiable. Any permutation of the class labels $\sigma:[K]\to[K]$ also leads to the same (marginal) distribution of inputs:
\[
\begin{aligned}
&\textstyle\sum_{k=1}^Kp\{x,Y=k\}\exp\left(\textstyle(\mu_{Q,k} - \mu_{P,k})^\top x + \frac12\|\mu_{P,k}\|_2^2 - \frac12\|\mu_{Q,k}\|_2^2\right) \\
&\textstyle\quad= \sum_{k=1}^Kp\{x,Y=k\}\exp\left(\textstyle(\mu_{Q,\sigma(k)} - \mu_{P,k})^\top x + \frac12\|\mu_{P,k}\|_2^2 - \frac12\|\mu_{Q,\sigma(k)}\|_2^2\right).
\end{aligned}
\]
From this example, we see that the non-identifiability of the exponential tilt model is closely related to the label switching problem in clustering. Intuitively, the exponential tilt model in the preceding example is too flexible because it can tilt any $p\{x,Y=k\}$ to $q\{x,Y=l\}$. Thus there is ambiguity in which $p\{x,Y=k\}$ tilts to which $q\{x,Y=l\}$. In the rest of this subsection, we present an identification restriction that guarantees the identifiability of the exponential tilt model.

A standard identification restriction in related work on domain adaptation is a clustering assumption. For example, \citet{tachet2020Domain} assume there is a partition of $\cX$ into disjoint sets $\cX_k$ such that $\supp(P\{\cdot\mid Y=k\}),\ \supp(Q\{\cdot\mid Y=k\}) \subset \cX_k$ for all $k\in[K]$. This assumption is strong: it implies there is a perfect classifier in the source and target domains. Here we consider a weaker version of the clustering assumption: there are sets $\cS_k$ such that
\[
P\{Y=k\mid X\in\cS_k\} = Q\{Y=k\mid X\in\cS_k\} = 1.
\]
We note that the $\cS_k$'s can be much smaller than the $\cX_k$'s; this permits the supports of $P\{\cdot\mid Y=k\}$ and $P\{\cdot\mid Y=l\}$ to overlap. 

\begin{definition}[anchor set]
\label{def:anchor-set}
A set $\cS_k\subset\cX$ is an \textbf{anchor set} for class $k$ if $p\{x,Y=k\} > 0$ and $p\{x,Y=l\} = 0$, $l\ne k$ for all $x\in\cS_k$.
\end{definition}

% In terms of the regression function (in the source domain) $\eta_P$, an anchor set for the $k$-th class is a set of points such that $\big[\eta_P(x)\big]_k = 1$.

\begin{proposition}[identifiability from anchor sets]
\label{prop:anchor-points} 
If there are anchor sets $\cS_k$ for all $K$ classes (in the source domain) and $T(\cS_k)$ is $p$-dimensional, then there is at most one set of $\theta_k$'s and $\alpha_k$'s that satisfies \eqref{eq:distribution-matching}.
\end{proposition}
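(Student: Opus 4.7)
The plan is to pin down each pair $(\theta_k,\alpha_k)$ separately by evaluating the distribution matching identity on the anchor set $\cS_k$, where by design only the $k$-th term of the mixture survives. Suppose $\{(\theta_k,\alpha_k)\}_{k=1}^K$ and $\{(\theta_k',\alpha_k')\}_{k=1}^K$ are two sets of parameters satisfying \eqref{eq:distribution-matching}. Equating the two representations of $q_X\{x\}$ gives the pointwise identity
\[
\sum_{k=1}^K p\{x,Y=k\}\exp(\theta_k^\top T(x)+\alpha_k) \;=\; \sum_{k=1}^K p\{x,Y=k\}\exp((\theta_k')^\top T(x)+\alpha_k')
\]
for (almost) every $x\in\cX$.

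Fix $j\in[K]$ and restrict attention to $x\in\cS_j$. By Definition~\ref{def:anchor-set}, $p\{x,Y=l\}=0$ for $l\ne j$, so every term with $k\ne j$ vanishes on both sides. Since $p\{x,Y=j\}>0$ on $\cS_j$, we can cancel this common factor and take logarithms to obtain
\[
(\theta_j-\theta_j')^\top T(x) \;=\; \alpha_j'-\alpha_j \quad \text{for all } x\in\cS_j.
\]
Equivalently, the affine function $t\mapsto (\theta_j-\theta_j')^\top t-(\alpha_j'-\alpha_j)$ on $\reals^p$ vanishes on the entire image $T(\cS_j)$.

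The hypothesis that $T(\cS_j)$ is $p$-dimensional means its affine hull is all of $\reals^p$, i.e.\ it contains $p+1$ affinely independent points. An affine function on $\reals^p$ that vanishes at $p+1$ affinely independent points is identically zero, forcing $\theta_j=\theta_j'$ and then $\alpha_j=\alpha_j'$. Repeating the argument for each $j\in[K]$ yields the claimed uniqueness.

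The bulk of the argument is clean once the anchor property has reduced the mixture identity to per-class equations. The only subtle point I would flag is the interpretation of ``$p$-dimensional'': it has to be read in the affine sense so that the intercept $\alpha_j'-\alpha_j$ is pinned down as well; a linear-span condition alone would only constrain $\theta_j-\theta_j'$ in certain directions and leave the intercept ambiguous. Beyond this bookkeeping, no analytic or probabilistic machinery is needed --- the anchor sets do essentially all of the work.
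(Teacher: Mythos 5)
Your proof is correct and follows essentially the same route as the paper: restrict the mixture identity to each anchor set $\cS_k$, cancel the surviving factor $p\{x,Y=k\}>0$, take logarithms, and use the dimensionality of $T(\cS_k)$ to kill the resulting affine identity. Your flagged subtlety is in fact a point where you are more careful than the paper's own argument, which invokes only $p$ linearly independent points $T(x_1),\dots,T(x_p)$ — enough to determine $\theta_k-\theta_k'$ up to the one-dimensional ambiguity coupled with the intercept, but strictly speaking one needs $p+1$ affinely independent points (as you require) to separate $\theta_k$ from $\alpha_k$.
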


This identification restriction is also closely related to the linear independence assumption in \citet{gong2016Domain}. Inspecting the proof of proposition \ref{prop:anchor-points} (see Appendix \ref{supp:prop:anchor-points}), we see that the anchor set assumption implies linear independence of $\{p_k(x)\exp(\theta_k^\top T(x) + \alpha_k)\}_{k=1}^K$ for any set of $\theta_k$'s and $\alpha_k$'s. We study the anchor set assumption empirically in a synthetic experiment in Appendix \ref{sec:normal-mixture}. Our experiments show that the assumption is mild and is violated only under extreme data scenarios.
% empirically investigates the importance of the anchor set assumption and demonstrates that it is violated only in extreme cases.
% regarding the effectiveness of our exponential tilt model. 

%\input{sections/theory}

\subsection{Consistency in estimation of the tilt parameters}

Here, we establish a convergence rate for the estimated tilt parameters (Lemma \eqref{lemma:KL-distribution-matching}) and the ExTRA importance weights (Equation \eqref{lemma:KL-distribution-matching}). To simplify the notation, we define $S(x) = (1, T(x)^\top)^\top$ as the extended sufficient statistics for the exponential tilt and denote the corresponding tilt parameters as $\coef_k = (\alpha_k, \theta_k^\top)^\top$.  We let $\coef_k^\star = (\alpha_k^\star, {\theta_k^\star}^\top )^\top$'s  be the true values of the tilt parameters $\coef_k$'s and let $\coef = (\coef_1^\top, \dots, \coef_K^\top)^\top\in \reals^{K(p+1)}$  be the long vector containing  all the tilt parameters. 
We recall that estimating the parameters from the optimization stated in Lemma  \ref{lemma:KL-distribution-matching}  requires a classifier $\heta_P$ on the source data. So, we define our objective for estimating $\coef$ through a generic  classifier $\eta : \cX \rightarrow \Delta^{K}$. Denoting $\eta_k(x)$ as the $k$-th co-ordinate of $\eta(x)$ we define the expected log-likelihood objective as:
\[
\textstyle
\klloss(\eta, \coef) = \Ex_{Q_X}[\log \{ \sum_{k = 1}^K \eta_k(X) \exp(\coef_k^\top S(X)) \}] - \log [\Ex_{P} \{  \exp(\coef_Y^\top S(X)) \}]\,,
\] and its empirical version as 
\[
\textstyle
\hat \klloss(\eta, \coef) = \Ex_{\hQ_X}[\log \{ \sum_{k = 1}^K \eta_k(X) \exp(\coef_k^\top S(X)) \}] - \log [\Ex_{\hP} \{  \exp(\coef_Y^\top S(X)) \}]\,.
\]

% The term   $\log [\Ex_{P} \{  \exp(\coef_Y^\top S(X)) \}]$ in the  objective works as a normalizer: for any $\eta : \cX \to \Delta^K$ the function $[ \sum_{k = 1}^K \eta_k(x) \exp(\coef_k^\top S(x)) ]/[\Ex_{P_X} \{ \sum_{k = 1}^K \eta_k(x) \exp(\coef_k^\top S(x)) \}] $ is a density over the support $\cX$ with respect to the base measure $P_X$. Furthermore, plugging in the Bayes classifier $\eta^\star_P$ (defined as $\eta^\star_{P, k}(x) = p\{x, Y = k\}/p_X(x)$) and  $\scoef$ we realize that the quantity $\sum_{k = 1}^K \eta_{P, k}^\star(x) \exp(\coef_k^\top S(x))$ is a density with respect to the base measure $P_X$, as we explain in the following display: 
% \[\textstyle
% \sum_{k = 1}^K \eta_{P, k}^\star(x) \exp\{{\coef_k^\star}^\top S(x)\} = \sum_{k = 1}^K \frac{p\{x, Y = k\}}{p_X(x)} \exp\{{\coef_k^\star}^\top S(x)\} =  \frac{\sum_{k = 1}^K q\{x, Y = k\}}{p_X(x)} = \frac{q_X(x)}{p_X(x)}
% \]
% and hence it holds $\Ex_{P_X} \{ \sum_{k = 1}^K \eta_{P, k}^\star(x) \exp\{{\coef_k^\star}^\top S(x)\} = 1$. In the next lemma we establish that $\scoef$ is the unique maximizer of $\klloss(\eta_P^\star, \coef)$, which is crucial for establishing that $\hcoef$ converges to $\scoef$. 

% \begin{lemma}\label{lemma:uniqueness}
% $\coef \mapsto \klloss(\eta_P^\star, \coef)$ is uniquely maximized at $\scoef$. 
% \end{lemma}

To establish the consistency of MLE we first make an assumption that the loss $\coef \mapsto -\klloss(\eta^\star_P, \coef)$ is strongly convex at the true parameter value. 
\begin{assumption}
\label{assmp:strong-convexity}
The loss $\coef \mapsto -\klloss(\eta^\star_P, \coef)$ is  strongly convex at $\scoef$, \ie, there exists a constant $\strongconvexity>0$ such that for any $\coef$ it holds:
\[ \textstyle
-\klloss(\eta^\star_P, \coef) \ge -\klloss(\eta^\star_P, \scoef) - \partial_\coef\klloss(\eta^\star_P, \scoef)  ^\top (\coef - \scoef) +\frac{\mu}{2}\|\coef - \scoef\|_2^2\,. 
\]
\end{assumption}
We note that the assumption is a restriction on the distribution $Q$ rather than the objective itself. For technical convenience we next assume that the feature space is bounded.

\begin{assumption} \label{assump:bounded-covariate}
$\cX$ is bounded, \ie, there exists  an $M>0$ such that $\cX \subset B(0, M)$.
\end{assumption}

Recall, from Lemma \ref{lemma:KL-distribution-matching}, that we need a fitted source classifier $\heta_P$  to estimate the tilt parameter: $\scoef$ is estimated by maximizing $\hat \klloss(\heta_P, \coef)$ rather than the unknown $\hat \klloss(\eta_P^\star, \coef)$. While analyzing the convergence of $\hcoef$ we are required to control the difference  $\hat \klloss(\hat \eta_P, \coef) - \hat \klloss(\eta_P^\star, \coef)$. 
% This  requires the preactivated logit functions (definition follows) for $\heta_P$ to be close to $\eta_P^\star$.
% \begin{definition}[Preactivated logit function] \label{def:pre-activation-logit}
% A  function $f: \cX \to \reals^K$ is a preactivation logit
% function for a generic  probabilistic classifier $\eta: \cX \to \Delta^K$ if it satisfies \[ \textstyle
% \frac{\exp(f_k(x))}{\sum_{j = 1}^K \exp(f_j(x))} =  \eta_k(x)\,.
% \]
% \end{definition}
% We notice that the preactivation logit functions are not unique:  if $f$ is a pre-activation function for a probabilistic classifier $\eta$ then for any $a:\cX\to \reals$ it holds \[
% \textstyle
% \frac{\exp(f_k(x))}{\sum_{j = 1}^K \exp(f_j(x))} = \frac{\exp(f_k(x)+a(x))}{\sum_{j = 1}^K \exp(f_j(x) +a(x))} 
% \]   \ie\ both $\{f_k(x)\}_{k = 1}^K$ and $\{f_k(x) + a(x)\}_{k = 1}^K$ are valid preactivation functions $\eta$.  We induce the uniqueness by considering  the centered logit function: a preactivation logit  function $f$ of a classifier $\eta$ is centered if it satisfies $\sum_{k = 1}^Kf_k(x) = 0 $ for all $x \in \cX$. The centered logit functions are unique since it holds:
% \[ \textstyle
% \frac{\exp(f_k(x))}{\sum_{j = 1}^K \exp(f_j(x))} =  \eta_k(x), ~ \sum_{j = 1}^K{f_j(x)} = 0  \Longleftrightarrow f_k(x) = \log\{\eta_k(x)\} - \frac1K \sum_{j = 1}^K \log\{\eta_j(x)\}\,.\]
To ensure the difference is small, assume the pilot estimate of the source regression function $\heta_P$ is consistent at some rate $r_{n_P}$.
% We next assume that the centered logit for $\heta_P$ convegres to the one for $\eta_P^\star $. 
\begin{assumption}
\label{assmp:source-classifier}
Let $f_{P, k}^\star(x) = \log\{\eta_{P, k}^\star(x)\} - \frac 1K \sum_{j = 1}^K \log\{\eta_{P, j}^\star(x)\}$. We assume that there exist an estimators $\{\hat f _{P, k}(x)\}_{k =1 }^K$ for $\{f^\star_{P, k}(x)\}_{k = 1}^K$ such that the following holds: there exists a constant $c>0$ and a sequence  $r_{n_P} \to 0$ such that  for almost surely $[\bbP_X]$  it holds
\[\textstyle
\Pr(\|\hat f_P(x) - f_P^\star(x)\|_{2} > t) \le \exp(-c t^2 /r_{n_P}^2), ~ t > 0\,.
\]
\end{assumption}
We use the estimated logits $\{\hat f _{P, k}(x)\}_{k =1 }^K$ to construct the regression functions as  $\heta_{P, k}(x) = \exp(\hat f_{P, k}(x))/\{\sum_{j = 1}^K \exp(\hat f_{P, j}(x))\}$, which we use in the objective stated in Lemma  \ref{lemma:KL-distribution-matching} to analyze the convergence of the tilt parameter estimates and the ExTRA weights.  With the above assumptions we're now ready to state concentration bounds for $\hcoef - \scoef$ and  $\hat \weight - \weight^\star$, where  the true importance weight $\weight^\star$ is  defined as $\weight^\star(x, y) = \exp({\coef_y^\star}^\top S(x))$. 

\begin{theorem} \label{thm:tilt-concentration}
Let the assumptions \ref{assmp:strong-convexity}, \ref{assump:bounded-covariate} and \ref{assmp:source-classifier} hold. For the sample sizes $n_P, n_Q$ define $\alpha_{n_P, n_Q} = r_{n_P} \sqrt{\log(n_Q)}+ {\{(p+1)K/{n_P}\} }^{1/2} + {\{(p+1)K/n_Q\} }^{1/2}$. There exists constants $k_1, k_2>0$ such that for any $\delta>0$ with probability at least $1 - (2K+ 1)\delta$ the following hold:
\[
\|\hcoef - \scoef \|_2 \le k_1 \alpha_{n_P, n_Q}\sqrt{\log(1/\delta)}, \text{ and } \|\hat \weight - \weight^\star\|_{1, P} \le k_2 \alpha_{n_P, n_Q}\sqrt{\log(1/\delta)}. 
\]
\end{theorem}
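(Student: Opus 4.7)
The plan is to run a standard M-estimator localization argument and then bound the score at $\scoef$ by decomposing it into a plug-in error coming from $\heta_P$ and a vanilla empirical-process fluctuation. Since $\hcoef$ maximizes $\xi \mapsto \hat\klloss(\heta_P, \xi)$, optimality yields $\hat\klloss(\heta_P, \hcoef) - \hat\klloss(\heta_P, \scoef) \ge 0$; Assumption~\ref{assmp:strong-convexity} together with the first-order condition $\nabla_\xi \klloss(\eta^\star_P, \scoef) = 0$ gives $\klloss(\eta^\star_P, \scoef) - \klloss(\eta^\star_P, \hcoef) \ge \tfrac{\strongconvexity}{2}\|\hcoef - \scoef\|_2^2$. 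Adding these and setting $R_n(\xi) = \hat\klloss(\heta_P, \xi) - \klloss(\eta^\star_P, \xi)$, I get $\tfrac{\strongconvexity}{2}\|\hcoef - \scoef\|_2^2 \le R_n(\hcoef) - R_n(\scoef)$. Because $\cX$ is bounded (Assumption~\ref{assump:bounded-covariate}), the Hessian of $R_n$ is uniformly bounded in a neighborhood of $\scoef$, so a short localization-by-Taylor argument produces the clean consequence
$$\|\hcoef - \scoef\|_2 \le \tfrac{2}{\strongconvexity}\,\|\nabla_\xi \hat\klloss(\heta_P, \scoef)\|_2.$$

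The bulk of the work is then to bound the right-hand side. I would write $\nabla_\xi \hat\klloss(\heta_P, \scoef) = A_n + B_n$, with $B_n = \nabla_\xi\hat\klloss(\eta^\star_P, \scoef) - \nabla_\xi\klloss(\eta^\star_P, \scoef)$ a centered empirical process (the population gradient vanishes at $\scoef$) and $A_n = \nabla_\xi\hat\klloss(\heta_P, \scoef) - \nabla_\xi\hat\klloss(\eta^\star_P, \scoef)$ the plug-in error. The term $B_n$ is a sum of two independent averages, one over the $n_Q$ target points and one over the $n_P$ source points; since $S(x) = (1, T(x)^\top)^\top$ is bounded by Assumption~\ref{assump:bounded-covariate}, the summands are bounded random vectors in $\reals^{K(p+1)}$, and a vector Bernstein or Hoeffding inequality gives $\|B_n\|_2 \lesssim \bigl(\sqrt{(p+1)K/n_P} + \sqrt{(p+1)K/n_Q}\bigr)\sqrt{\log(1/\delta)}$.

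The plug-in term $A_n$ is the most delicate step. The gradient contribution from $\Ex_{\hQ_X}[\log(\cdot)]$ depends on $\heta_P$ through the softmax logits $\hat f_P$, and a first-order expansion of $\eta_k(x)\exp(\coef_k^\top S(x))/\sum_j \eta_j(x)\exp(\coef_j^\top S(x))$ in $\hat f_P - f_P^\star$ makes $A_n$ Lipschitz in $\max_{i\le n_Q}\|\hat f_P(X_{Q,i}) - f_P^\star(X_{Q,i})\|_2$. Assumption~\ref{assmp:source-classifier} supplies a pointwise sub-Gaussian tail with rate $r_{n_P}$, so a union bound over the $n_Q$ target points yields $\max_i \|\hat f_P(X_{Q,i}) - f_P^\star(X_{Q,i})\|_2 \lesssim r_{n_P}\sqrt{\log(n_Q/\delta)}$ and hence $\|A_n\|_2 \lesssim r_{n_P}\sqrt{\log n_Q}\sqrt{\log(1/\delta)}$. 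Combining the two bounds gives the claimed rate $\|\hcoef - \scoef\|_2 \le k_1\alpha_{n_P,n_Q}\sqrt{\log(1/\delta)}$ on an event of probability at least $1 - (2K+1)\delta$, the $2K+1$ factor absorbing the separate tail events used for the $K$ softmax-class bounds, the pointwise pilot bounds over target points, and the two empirical-process pieces.

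For the weight conclusion, I would invoke the mean value theorem coordinate-wise: $\hat\weight(x,y) - \weight^\star(x,y) = \exp(\tilde\coef_y^\top S(x))\,(\hcoef_y - \scoef_y)^\top S(x)$ for some $\tilde\coef_y$ on the segment between $\hcoef_y$ and $\scoef_y$. On the event where $\|\hcoef - \scoef\|_2$ is already small, $\exp(\tilde\coef_y^\top S(x))$ is uniformly bounded in $x$ by Assumption~\ref{assump:bounded-covariate}, and $\|S(x)\|_2$ is likewise bounded, so $\|\hat\weight - \weight^\star\|_{1,P} = \Ex_P|\hat\weight - \weight^\star|$ is controlled by a constant multiple of $\|\hcoef - \scoef\|_2$, which delivers the second bound with the same high-probability rate.
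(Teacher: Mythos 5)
Your overall architecture is essentially the paper's: exploit the optimality of $\hcoef$ together with the strong convexity of $-\klloss(\eta_P^\star,\cdot)$ at $\scoef$, split the resulting error into a plug-in term driven by $\heta_P$ (controlled via Assumption \ref{assmp:source-classifier} and a union bound over the $\ntarget$ target points, yielding $r_{\nsource}\sqrt{\log \ntarget}$) plus a centered empirical fluctuation (yielding the two $\sqrt{(p+1)K/n}$ terms), and finish the weight bound with the mean value theorem and boundedness of $S$. The one step that does not go through as written is the reduction from $\tfrac{\strongconvexity}{2}\|\hcoef-\scoef\|_2^2\le R_n(\hcoef)-R_n(\scoef)$ to $\|\hcoef-\scoef\|_2\le \tfrac{2}{\strongconvexity}\|\partial_\coef\hat\klloss(\heta_P,\scoef)\|_2$. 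Expanding your $R_n$ to second order about $\scoef$ gives $R_n(\hcoef)-R_n(\scoef)=\partial_\coef R_n(\scoef)^\top(\hcoef-\scoef)+\tfrac12(\hcoef-\scoef)^\top\partial_\coef^2 R_n(\tilde\coef)(\hcoef-\scoef)$; if the Hessian of $R_n$ is merely \emph{bounded} by some constant $C$, the remainder $\tfrac{C}{2}\|\hcoef-\scoef\|_2^2$ cannot be absorbed into the left-hand side unless $C<\strongconvexity$, which nothing guarantees. What you actually need is that $\partial_\coef^2 R_n$ is uniformly \emph{small} on a neighborhood of $\scoef$ --- a uniform law of large numbers for the Hessian discrepancy, which is an empirical-process statement of the same difficulty as the gradient bound you do prove --- or, as the paper does, a bound on $\sup_{\coef}\|\partial_{\coef}\hat\klloss(\heta_P,\coef)-\partial_\coef\klloss(\eta_P^\star,\coef)\|_2$ taken over a localization ball, obtained by chaining (this supremum is precisely where the $\sqrt{(p+1)K}$ dimension factor enters).

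Relatedly, either route presupposes that $\hcoef$ already lies in a fixed bounded neighborhood of $\scoef$ with high probability: since $\exp(\coef_y^\top S(x))$ is unbounded in $\coef$, no uniform gradient or Hessian control is available on all of $\reals^{K(p+1)}$. The paper establishes this preliminary consistency separately (uniform convergence of the empirical objective on compacts plus an argmax-consistency theorem, \citet[Cor.\ 3.2.3]{vaart2000Weak}) before running the local quadratic argument and before invoking the lower bound on the normalizer (its Lemma \ref{lemma:lb-normalizer}); your proposal asserts localization but does not supply it. With these two repairs --- a preliminary consistency step, and uniform control over the localization ball of the gradient (or Hessian) discrepancy in place of ``the Hessian of $R_n$ is uniformly bounded'' --- your argument coincides with the paper's and delivers the stated rates, including the mean-value-theorem step for $\|\hat\weight-\weight^\star\|_{1,P}$, which is exactly the paper's.
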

In Theorem \ref{thm:tilt-concentration} we notice that as long as $r_{n_P} \log(n_Q) \to 0$ for $n_P, n_Q\to \infty$ we have $\alpha_{n_P,n_Q} \to 0$. This implies both \emph{the estimated tilt parameters and the ExTRA weights converge to their true values as the sample sizes $n_P, n_Q\to \infty$}.

% \my{summarize the implications of theoretical results}
% \YK{If we need space, we can defer the fine-tuning theory and model evaluation theory to the supplement}
We next provide theoretical guarantees for the downstream tasks (1) fine-tuning and (2) target performance evaluation that we described in Section \ref{sec:exponential-tilt}. 

\paragraph{Fine-tuning}
We establish a generalization bound for the fitted model \eqref{eq:weighted-ERM} using weighted-ERM on source domain.
% To begin with we define notations. 
We denote $\cF$ as the classifier hypothesis class. For $f \in \cF$ and a weight function $\weight: \cX \times \cY \to \reals_{\ge 0} $ define the weighted loss function and its empirical version on source data as: 
\[ \textstyle
\cL_P(f, w) = \Ex_{P} [\weight(X, Y)\ell(f(X), Y)], ~ \hat \cL_P (f, w) = \Ex_{\hP} [\weight(X, Y)\ell(f(X), Y)]\,.
\] We also define the loss function on the target data as: $
\cL_Q(f) = \Ex_{Q} \big[\ell(f(X), Y)\big]\,.$
If $\{(\theta_k^\star, \alpha_k^\star)\}_{k = 1}^K$ is the true value of the tilt parameters in  \eqref{eq:exponential-tilt}, \ie, the following holds:
\[ \textstyle
  q\{x, Y = k\} = p\{x, Y = k\} \exp\{\alpha_k^\star + (\theta_k^\star)^\top T(x)\}; \ k \in [K]\,,
\] then defining $\weight^\star(x, k) = \exp\{\alpha_k^\star + (\theta_k^\star)^\top T(x)  \}$ as the true weight we notice that  $\cL_P(f, \weight^\star) = \cL_Q(f)$, which is easily observed by setting $g(x, y) = \ell(f(x), y)$ in the display \eqref{eq:true-weighted-expectation}.

% as we show in the  following display:
% \[ \textstyle
% \begin{aligned}
% \cL_P(f, \weight^\star) 
%  & = \int \sum_{y= 1}^K \weight^\star (x, y) \ell\{f(x), y\} p\{x, Y = y\} dx\\
%  & = \int \sum_{y= 1}^K  \ell\{f(x), y\} q\{x, Y = y\} dx=\cL_Q(f)\,.
% \end{aligned}
% \] 

% We next define the Rademacher complexity \cite{bartlett2002rademacher} that  has been frequently used in  machine learning literature to establish a generalization bound. 
% Instead of considering the Rademacher complexity on $\cF$ we define the class of weighted losses $\cG(\ell, \cF) = \{g_f(x, y) = w^\star(x, y) \ell(f(x), y): \ f \in \cF\}$  and for $n \in \bbN$ we define its Rademacher complexity measure as 
% \[ \textstyle
%     \cR_n(\cG) \triangleq \Ex_{\{(u_i, v_i)\}_{i = 1}^n\stackrel{\IID}{\sim} P} \big[\Ex_{\{\xi_i\}_{i = 1}^n}[\sup_{f \in \cF} \frac1{n} \sum_{i = 1}^n \xi_i \weight^\star(u_i, v_i) \ell \{f(u_i), v_i\}]\big]
% \]
% where $\{\xi_i\}_{i = 1}^n $ are \IID \ symmetric Rademacher random variables. 

To establish our generalization bound we require Rademacher complexity \citep{bartlett2002rademacher} (denoted as $\cR_{n_P}(\cG)$; see Appendix \ref{supp:rademacher} for details) and the following assumption on the loss function.

\begin{assumption}
\label{assump:bounded-loss}
The loss function $\ell$ is bounded, \ie, for some $B>0$, $|\ell\{f(x), y\}| \le B$ for any $f\in \cF$, $x \in \cX$ and $y \in [K]$. 
\end{assumption}

With the above definitions and the assumption we  establish our generalization bound. 

\begin{lemma}
\label{lemma:generalization-bound} For a weight function $\weight$ and the source samples $\{(X_{P, i}, Y_{P, i})\}_{i = 1}^\nsource$ of size $\nsource$ let $\hat f_{\weight} = \argmin_{f\in\cF} \hat \cL_P(f, \weight)$. There exists a constant $c> 0$ such that  the following generalization bound holds with probability at least $1 -\delta$
\begin{equation}\textstyle
\label{eq:gen-bound}
    \cL_Q(\hat f_{ \weight}) - \min_{f\in\cF}\cL_Q (f) \le  2 \cR_{n_P}(\cG) + B  \| \weight - \weight^\star \|_{1, P} + c \sqrt{\frac{\log(1/\delta)}{\nsource}} \,,
\end{equation} where $\cR_{n_P}(\cG)$ is the Rademacher complexity of $\cG = \{ \weight^\star(x, y) \ell(f(x), y): \ f \in \cF\}$  defined in Appendix A.1. 
\end{lemma}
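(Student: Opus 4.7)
The plan is to reduce the target excess risk to a standard ERM excess-risk bound on the \emph{source} distribution plus an additive correction for the weight mismatch. The reduction rests on the identity $\cL_P(f, \weight^\star) = \cL_Q(f)$, which is just \eqref{eq:true-weighted-expectation} applied with $g(x,y) = \ell(f(x), y)$. Letting $f^\star_Q \in \argmin_{f\in\cF}\cL_Q(f)$, this identity lets me rewrite
\[
\cL_Q(\hat f_\weight) - \cL_Q(f^\star_Q) = \cL_P(\hat f_\weight, \weight^\star) - \cL_P(f^\star_Q, \weight^\star),
\]
so the right-hand side becomes a purely source-side quantity.

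The next step is to add and subtract $\hat\cL_P(\hat f_\weight, \weight^\star)$, $\hat\cL_P(\hat f_\weight, \weight)$, $\hat\cL_P(f^\star_Q, \weight)$, and $\hat\cL_P(f^\star_Q, \weight^\star)$ and regroup. This produces three kinds of contributions: (i) a pair of deviations bounded jointly by $2\sup_{f\in\cF}|\cL_P(f, \weight^\star) - \hat\cL_P(f, \weight^\star)|$; (ii) an optimality term $\hat\cL_P(\hat f_\weight, \weight) - \hat\cL_P(f^\star_Q, \weight) \le 0$ by the definition of $\hat f_\weight$; and (iii) two weight-mismatch remainders each at most $B\cdot \frac{1}{\nsource}\sum_{i=1}^{\nsource}|\weight - \weight^\star|(X_{P,i}, Y_{P,i})$, using $\|\ell\|_\infty \le B$ from Assumption~\ref{assump:bounded-loss}. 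For (i), I would apply the textbook symmetrization argument to the class $\cG(\ell, \cF)$ to obtain $\Ex[\sup_{f\in\cF}|\cL_P(f, \weight^\star) - \hat\cL_P(f, \weight^\star)|] \le 2\cR_{\nsource}(\cG)$, and then McDiarmid's bounded-differences inequality to upgrade this to a high-probability statement with an additive $O(\sqrt{\log(1/\delta)/\nsource})$. For (iii), the summands $|\weight - \weight^\star|(X_{P,i}, Y_{P,i})$ are i.i.d.\ bounded with mean $\|\weight - \weight^\star\|_{1,P}$, so Hoeffding's inequality gives another $O(\sqrt{\log(1/\delta)/\nsource})$ concentration. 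A union bound over the two concentration events and absorbing constants into a single $c$ yields the claimed bound.

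The main obstacle is establishing the boundedness that McDiarmid and Hoeffding need. Concretely, one needs uniform upper bounds on $\weight^\star(x,y)\ell(f(x),y)$ and on $|\weight - \weight^\star|(x,y)$. Boundedness of $\weight^\star$ follows from Assumption~\ref{assump:bounded-covariate} once $T$ is taken continuous on $\cX$, since then $\weight^\star(x,y) = \exp(\alpha^\star_y + {\theta^\star_y}^\top T(x))$ lies in a compact range. A comparable bound must be assumed (or inherited) for $\weight$; in the intended application $\weight = \hat\weight$ of \eqref{eq:weights-estimate} has the same exponential-tilt form and the boundedness transfers. With these uniform bounds in hand, the rest is a routine combination of symmetrization, McDiarmid, and Hoeffding, and the constant $c$ will depend polynomially on $B$ and on the uniform bound on $\weight^\star$.
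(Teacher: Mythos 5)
Your proposal follows essentially the same route as the paper: the identity $\cL_P(f,\weight^\star)=\cL_Q(f)$, a five-term add-and-subtract decomposition with the ERM optimality term dropped, symmetrization plus McDiarmid for the uniform deviation, and Hoeffding for the weight-mismatch and fixed-comparator terms. The one place you deviate is in bundling \emph{both} endpoint deviations into a single two-sided supremum $2\sup_{f\in\cF}|\cL_P(f,\weight^\star)-\hat\cL_P(f,\weight^\star)|$: after symmetrization this yields $4\cR_{\nsource}(\cG)$, not the $2\cR_{\nsource}(\cG)$ claimed in the lemma. To recover the stated constant, handle the deviation at the fixed minimizer $f^\star_Q$ separately with a plain Hoeffding bound (no supremum over $\cF$ is needed there since $f^\star_Q$ is not data-dependent), which is exactly what the paper does; your explicit attention to the boundedness of $\weight^\star$ needed for McDiarmid/Hoeffding is a point the paper itself treats rather casually.
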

In Theorem \ref{thm:tilt-concentration} we established an upper bound for the estimated weights $\hat \weight$, which concludes that $\hat f_{\hat \weight}$ has the following generalization bound:  for any $\delta > 0$, with probability at least $1 - (2K+ 2)\delta$
\[ \textstyle
\cL_Q(\hat f_{ \hat \weight}) - \min_{f\in\cF}\cL_Q (f) \le  2 \cR_{n_P}(\cG)  + k_2\alpha_{\nsource, \ntarget} \sqrt{\log(1/\delta)} + c \sqrt{\log(1/\delta)/\nsource} \,,
\] where $k_2$ is the constant in Theorem \ref{thm:tilt-concentration} and $c$ is the constant in Lemma \ref{lemma:generalization-bound}. The generalization bound implies that for large sample sizes ($n_P, n_Q\to \infty$) the target accuracy of weighted ERM on source data well approximates the accuracy of ERM on target data.

\paragraph{Target performance evaluation}

We provide a theoretical guarantee for the target performance evaluation \eqref{eq:target-performance} using our importance weights. Here we only consider the functions $g:\cX \times \cY \to \reals$ which are bounded by some $B>0$, \ie\ $|g(x, y)| \le B$ for all $x\in \cX$ and $y \in \cY$. The simplest and the most frequently used example is the model accuracy which uses $0\text{-} 1$-loss as the loss function: for a model $f$ the loss $g(x, y) = \bbI\{f(x) = y\}$ is bounded with $B = 1$. For such functions  we notice that $\Ex_{Q}[g(X, Y)] = \Ex_{P}[g(X, Y)\weight^\star(X, Y)]$, as observed in display \eqref{eq:true-weighted-expectation}.
% \[
% \begin{aligned}
% \Ex_{Q}[g(X, Y)] & = \textstyle\int \sum_{y= 1}^K  g(x, y) q\{x, Y = y\} dx\\
% &=\textstyle \int \sum_{y= 1}^K  g(x, y) p\{x, Y = y\} \weight^\star(x, y) dx = \Ex_{P}[g(X, Y)\weight^\star(X, Y)]
% \end{aligned}
% \] 
This implies the following bound on the target performance evaluation error 
\[
\begin{aligned}
\big|\Ex_{Q}[g(X, Y)] - \Ex_{P}[g(X, Y)\hat \weight(X, Y)]\big|& = \big|\Ex_{P}[g(X, Y)\weight^\star(X, Y)] - \Ex_{P}[g(X, Y)\hat \weight(X, Y)]\big|\\
& \le B \Ex_{P}[|\hat \weight^\star(X, Y)- \weight^\star (X, Y)|]
 \le B \|\hat \weight - \weight^\star\|_{1, P}\,.
\end{aligned}
\] We recall the concentration bound for $\|\hat \weight - \weight^\star\|_{1, P}$ from Theorem \ref{thm:tilt-concentration} and conclude that the \emph{estimated target performance in \eqref{eq:target-performance} converges to the true target performance} at rate $\alpha_{\nsource, \ntarget}$. 
\section{\waterbirds\ case study}
\label{sec:waterbird}
To demonstrate the efficacy of the ExTRA algorithm for reweighing the source data we (i) verify the ability of ExTRA to upweigh samples most relevant to the target task; (ii) evaluate the utility of weights in downstream tasks such as fine-tuning and (iii) model selection.

\textbf{\waterbirds\ dataset} combines bird photographs from the Caltech-UCSD Birds-200-2011 (CUB) dataset \citep{wah2011caltech} and the image backgrounds from the Places dataset \citep{zhou2017places}. The birds are labeled  as one of
$\cY =$ \{waterbird, landbird\} and placed on one of $\cA =$ \{water background, land background\}.
% with
% waterbirds  more frequently appearing against a water  background and landbirds more frequently appearing on land.
The images are divided into four groups: landbirds on land (0); landbirds on water (1); waterbirds on land (2); waterbirds on water (3). The source dataset is highly imbalanced, i.e. the smallest group (2) has 56 samples. We embed all images with a pre-trained ResNet18 \citep{he2016Deep}. See Appendix
\ref{sup:exp:data}
for details.

We consider five subpopulation shift target domains: all pairs of domains with different bird types and the original test set \citep{sagawa2019Distributionally} where all 4 groups are present with proportions vastly different from the source.
% We learn importance weights with ExTRA by (1) maximizing the objective in Lemma  \eqref{lemma:KL-distribution-matching} to estimate the tilt parameters, (2) obtaining importance weight estimates as in \eqref{eq:weights-estimate}.
For all domains, we fit ExTRA weights (with ResNet18 features as $T(x)$) from 10 different initializations and report means and standard deviations for the metrics. See Appendix \ref{sup:exp:model}
for the implementation details.

\textbf{ExTRA weights quality} For a given target domain it is most valuable to upweigh the samples in the source data corresponding to the groups comprising that domain. The most challenging is the target $\{1,2\}$ consisting only of birds appearing on their atypical backgrounds. Groups $\{1,2\}$ correspond to 5\% of the source data making them most difficult to ``find''. To quantify the ability of ExTRA to upweigh these samples we report precision (proportion of samples from groups $\{1,2\}$ within the top $x\%$ of the weights) and recall (proportion of $\{1,2\}$ samples within the top $x\%$ of the weights) in Figure \ref{fig:precision-recall}. We notice that samples corresponding to $10\%$ largest ExTRA weights contain slightly over $80\%$ of the groups $\{1,2\}$ in the source data (recall). This demonstrates the ability of ExTRA to upweigh relevant samples. We present examples of upweighted images and results for other target domains in Appendix
\ref{sup:exp:results}.

\begin{wrapfigure}[13]{r}{0.4\linewidth}
% \vspace{-0.7cm}
\vspace{-0.45cm}
\centering
\includegraphics[width=\linewidth]{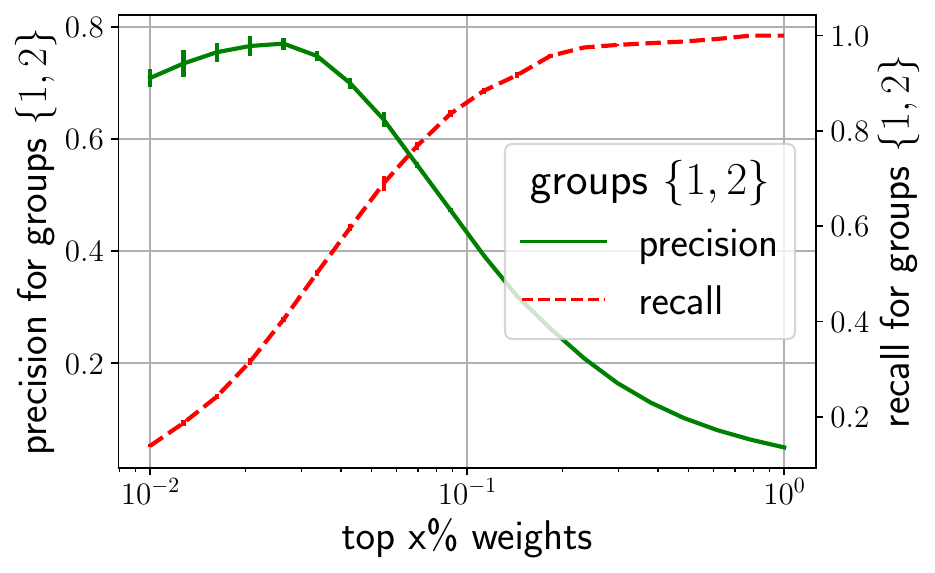}
\vspace{-.7cm}
\caption{ExTRA precision and recall for samples with top $x\%$ weights.}
\label{fig:precision-recall}
\end{wrapfigure}

\textbf{Model fine-tuning}
We demonstrate the utility of ExTRA weights in the fine-tuning  \eqref{eq:weighted-ERM}. The basic goal of such importance weighing is to improve the performance in the target in comparison to training on uniform source weights S -> T, \ie\  ERM. Another baseline is the DRO model \citep{hashimoto2018Fairness} that aims to maximize worst-group performance without access to the group labels, and JTT \citep{liu2021Just} that retrains a model after upweighting the misclassified samples by ERM. We consider two additional baselines that utilize group annotations to improve worst-group performance: re-weighing the source to equalize group proportions (RW$_\text{gr}$) and group DRO (gDRO) \citep{sagawa2019Distributionally}. The aforementioned baselines do not try to adjust to the target domain. Finally, we compare to $\pi$T -> T that fine-tunes the model only using the subset of the source samples corresponding to the target domain groups. In all cases we use logistic regression as model class.

\begin{wrapfigure}[14]{r}{0.45\linewidth}
\vspace{-0.2in}
    \centering
    \includegraphics[width=\linewidth]{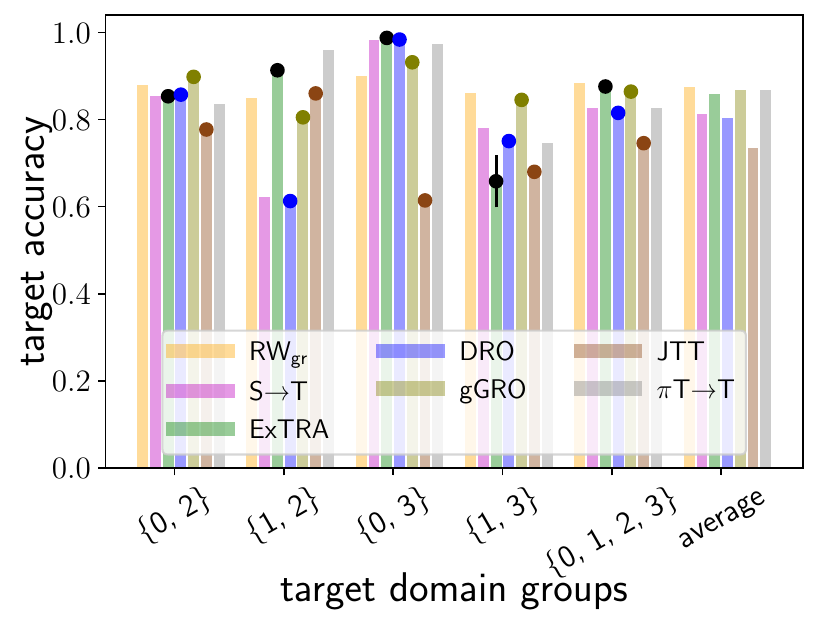}
    \vspace{-0.25in}
    \caption{Performance on \waterbirds.}
    \label{fig:waterbirds}
    % \vspace{-0.3in}
\end{wrapfigure}

We compare target accuracy across domains in Figure \ref{fig:waterbirds}. Analogous comparison with area under the receiver operator curve can be found in Figure \ref{fig:waterbird-auc} in Appendix \ref{sup:exp:waterbirds}. Model trained with ExTRA weights outperforms all ``fair'' baselines and matches the performance of the three baselines that had access to additional information. In all target domains ExTRA fine-tuning is comparable with the $\pi$T -> T supporting its ability to upweigh relevant samples. Notably, on \{1,2\} domain of both minority groups \emph{and} on \{0,3\} domain of both majority groups, ExTRA outperforms RW$_\text{gr}$ and gDRO that utilize group annotations. This emphasizes the advantage of adapting to the target domain instead of pursuing a more conservative goal of worst-group performance maximization. Finally, we note that ExTRA fine-tuning did not perform as well on the domain \{1,3\}, however neither did $\pi$T -> T.
% (although comparable to T -> T). The label distribution in this domain ($\Pr$(waterbird)=\my{XXX}) drastically differs from the source ($\Pr$(waterbird)=\my{XXX}) causing optimization instabilities in solving \eqref{eq:KL-distribution-matching} and resulting in larger standard deviation.

\begin{table}[]
\caption{Model selection results on \waterbirds}
    \centering

{\small
\begin{tabular}{l@{\hskip 0.3in}ccc@{\hskip 0.3in}ccc}
\toprule
{} & \multicolumn{3}{c}{target accuracy} & \multicolumn{3}{c}{rank correlation} \\
\cmidrule[1pt](lr){2-7}
target groups &            ExTRA & SrcVal & ATC-NE &            ExTRA & SrcVal & ATC-NE \\
\midrule
\{0, 2\}       &  0.819$\pm$0.012 &  0.854 &  \textbf{0.871} &   0.419$\pm$0.01 &  \textbf{0.807} &  0.760 \\
\{1, 2\}       &  \textbf{0.741}$\pm$0.047 &  0.616 &  0.646 &  \textbf{0.747}$\pm$0.106 & -0.519 & -0.590 \\
\{0, 3\}       &  \textbf{0.978}$\pm$0.001 &  \textbf{0.978} &  0.976 &  \textbf{0.962}$\pm$0.004 &  0.956 &  0.906 \\
\{1, 3\}       &  \textbf{0.757}$\pm$0.011 &  0.737 &  0.747 &  \textbf{0.361}$\pm$0.168 & -0.318 & -0.411 \\
\{0, 1, 2, 3\} &  \textbf{0.856}$\pm$0.034 &  0.803 &  0.818 &  \textbf{0.658}$\pm$0.295 &  0.263 &  0.178 \\
\midrule
average      &     \textbf{0.83} &  0.798 &  0.812 &    \textbf{0.753} &  0.166 &  0.110 \\
\bottomrule
\end{tabular}}
   \vspace{-0.4cm}
    \label{tab:waterbirds-model-validation}
\end{table}

\textbf{Model selection} out-of-distribution is an important task, that is difficult to perform without target data labels and group annotations \citep{gulrajani2020Search,zhai2021DORO}. We evaluate the ability of choosing a model for the target domain based on accuracy on the ExTRA reweighted source validation data. We compare to the standard source validation model selection (SrcVal) and to the recently proposed ATC-NE \citep{garg2022leveraging} that uses negative entropy of the predicted probabilities on the target domain to score models. We fit a total of 120 logistic regression models with different weighting (uniform, label balancing, and group balancing) and varying regularizers. See Appendix
\ref{sup:exp:model}
for details.

In Table \ref{tab:waterbirds-model-validation} we compare the target performance of models selected using each of the model evaluation scores and rank correlation between the corresponding model scores and true target accuracies. Model selection with ExTRA results in the best target performance and rank correlation on 4 out of 5 domains and on average. Importantly, the rank correlation between the true performance and ExTRA model scores is always positive, unlike the baselines, suggesting its reliability in providing meaningful information about the target domain performance.

\vspace{-0.3cm}
\section{\breeds\ case study}
% \vspace{-0.2cm}
\label{sec:breeds}
\breeds\ \citep{santurkar2020breeds} is a subpopulation shift benchmark derived from ImageNet \citep{deng2009imagenet}. It uses the class hierarchy to define groups within classes. For example, in the Entity-30 task considered in this experiment, class fruit is represented by strawberry, pineapple, jackfruit, Granny Smith in the source and buckeye, corn, ear, acorn in the target. This is an extreme case of subpopulation shift where source and target groups have zero overlap. We modify the dataset by adding a small fraction $\pi$ of random samples from the target to the source for two reasons: (i) our exponential tilt model requires some amount of overlap between source and target; (ii) arguably, in practice, it is more likely that the source dataset has at least a small representation of all groups.
% We remind that our method does not use the subgroup information.

\begin{wrapfigure}[12]{r}{0.4\linewidth}
    \vspace{-0.4cm}
    \centering
    \includegraphics[width=\linewidth]{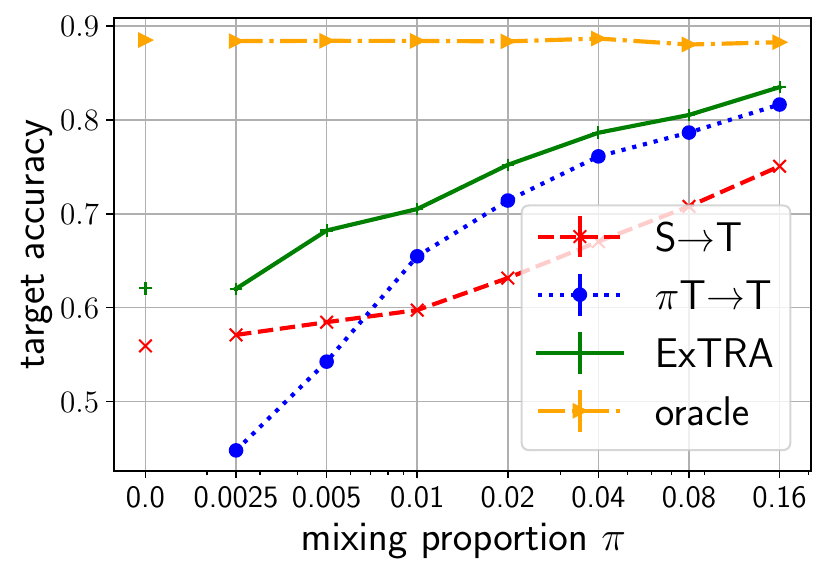}
    \vspace{-0.6cm}
    \caption{Performance on \breeds.}
    \label{fig:breeds}
\end{wrapfigure}

Our goal is to show that ExTRA can identify the target samples mixed into the source for efficient fine-tuning. We obtain feature representations from a pre-trained self-supervised SwAV \citep{caron2020unsupervised}. To obtain the ExTRA weights we use SwAV features as sufficient statistic.  We then train logistic regression models on (i) the source dataset re-weighted with ExTRA, (ii) uniformly weighted source (S -> T), (iii) target samples mixed into the source ($\pi$T -> T), (iv) all target samples (oracle). See Appendix \ref{sup:exp:data}, \ref{sup:exp:model}
for details. We report performance for varying mixing proportion $\pi$ in Figure \ref{fig:breeds}. First, we note that even when $\pi=0$, i.e. source and target have completely disjoint groups (similar to domain generalization), ExTRA improves over the vanilla S -> T. Next, we see that S -> T improves very slowly in comparison to ExTRA as we increase the mixing proportion; $\pi$T -> T improves faster as we increase the number of target samples it has access to, but never suppresses ExTRA and matches its improvement slope for the larger $\pi$ values. We conclude that ExTRA can effectively identify target samples mixed into source that are crucial for the success of fine-tuning \emph{and} find source samples most relevant to the target task allowing it to outperform $\pi$T -> T. We report analogous precision and recall for the \waterbirds\ experiment in Appendix
\ref{sup:exp:results}.
% \ref{sup:exp:results}.

\vspace{-0.25cm}
\section{Conclusion}
% \vspace{-0.25cm}
In this paper, we developed an importance weighing method for approximating expectations of interest on new domains leveraging unlabeled samples (in addition to a labeled dataset from the source domain). We demonstrated the applicability of our method on downstream tasks such as model evaluation/selection and fine-tuning both theoretically and empirically.
% evaluating the out-of-distribution risk of ML models that leverages unlabeled samples from the target domain (in addition to a labeled validation dataset from the source domain).
Unlike other importance weighing methods that only allow covariate shift between the source and target domains, we permit concept drift between the source and target. Though we demonstrate the efficacy of our method in synthetic setup of concept drift (Appendix \ref{sec:normal-mixture}), in a future research it would be interesting to investigate the performance in more realistic setups (\eg\ 
 CIFAR10.2 to CIFAR10.2 \citep{lu2020harder}, Imagenet to Imagenetv2 \citep{recht2019imagenet}). 
% We demonstrate the efficacy of the method by using it to perform fine-tuning and model selection.

Despite its benefits, the exponential tilt model does suffer from a few limitations. Implicit in the exponential tilt assumption is that the supports of the target class conditionals have some overlap with the corresponding source class conditionals. Although this assumption is likely satisfied 
in many instances of 
% in the subpopulation shift setting, they are violated in
domain generalization problems (and is always satisfied in the subpopulation shift setting), 
% \citep{koh2020WILDS},
an interesting avenue for future studies is to accommodate support alignment in the distribution shift model, \ie\ to align the supports for class conditioned feature distributions in source and target domains. One way to approach this is to utilize distribution matching techniques from domain adaptation literature \citep{ganin2016Domainadversarial,sun2016Deep,shen2018wasserstein}, similarly to \citet{cai2021Theory}. We hope aligning supports via distribution matching will allow our method to succeed on domain generalization problems where the support overlap assumption is violated.

\section{Ethics Statement} We recommend  considering the representation of the minority groups when applying ExTRA in the context of fairness-sensitive applications. The goal of ExTRA is to approximate the distribution of the target domain, thus, in order to use ExTRA weights for fine-tuning or model selection to obtain a fair model, the target domain should be well representative of both privileged and unprivileged groups. If the target domain has miss/under-represented groups, a model obtained using ExTRA weights may be biased.

\subsubsection*{Acknowledgments}
This paper is based upon work supported by the National Science Foundation (NSF) under grants no.\ 2027737 and 2113373. 

% \newpage
\bibliography{YK,SM}
\bibliographystyle{iclr2023_conference}
\newpage

\appendix

\section{Proofs}

%%%%%%%%%%%%%%%%%%%%%%%%%%%%%%%%%%%%%%%%%%
%%%%%%%%%% PROOF OF KL OBJECTIVE %%%%%%%%%
%%%%%%%%%%%%%%%%%%%%%%%%%%%%%%%%%%%%%%%%%%
\subsection{Rademacher complexity}
\label{supp:rademacher}
We next define the Rademacher complexity \citep{bartlett2002rademacher} that  has been frequently used in  machine learning literature to establish a generalization bound. 
Instead of considering the Rademacher complexity on $\cF$ we define the class of weighted losses $\cG(\ell, \cF) = \{g_f(x, y) = w^\star(x, y) \ell(f(x), y): \ f \in \cF\}$  and for $n \in \bbN$ we define its Rademacher complexity measure as 
\[ \textstyle
    \cR_n(\cG) \triangleq \Ex_{\{(u_i, v_i)\}_{i = 1}^n\stackrel{\IID}{\sim} P} \big[\Ex_{\{\xi_i\}_{i = 1}^n}[\sup_{f \in \cF} \frac1{n} \sum_{i = 1}^n \xi_i \weight^\star(u_i, v_i) \ell \{f(u_i), v_i\}]\big]
\]
where $\{\xi_i\}_{i = 1}^n $ are \IID \

\subsection{Proof of lemma \ref{lemma:KL-distribution-matching}}

If $D$ is the Kulback-Leibler (KL) divergence, then we can rewrite the objective in  \eqref{eq:robust-distribution-matching}  as:
\[
\begin{aligned}
& \Ex_{\hQ_X}\left[\textstyle\log\hq_X\{X\} - \log\Big\{\sum_{k=1}^K\hp\{X,Y=k\}\exp(\theta_k^\top T(X) + \alpha_k)\Big\}\right] \\
&\quad=\textstyle\Ex_{\hQ_X}\left[\textstyle\log\Big\{\sum_{k=1}^K\heta_{P,k}(X)\exp(\theta_k^\top T(X) + \alpha_k\Big\}\right] - \Ex_{\hQ_X} \Big[\log\Big\{\frac{q_X\{x\}}{p_X\{x\}}\Big\}\Big],
\end{aligned}
\] where the term $-\Ex_{\hQ_X} \big[\log\big\{\frac{q_X\{x\}}{p_X\{x\}}\big\}\big]$ in our objective does not involve any tilt parameters and we drop it from our objective. To simplify the notations we define \[
\begin{aligned}
O(\theta, \alpha) &\triangleq \Ex_{\hQ_X}\left[\textstyle\log\Big\{\sum_{k=1}^K\heta_{P,k}(X)\exp(\theta_k^\top T(X) + \alpha_k\Big\}\right]\\
N(\theta, \alpha) &\triangleq \Ex_{\hP} \big[ \exp(\theta_Y^\top T(X) + \alpha_Y) \big]\,.
\end{aligned}
\] 
where $(\theta, \alpha) \in \reals^q$ for $q = K(p+1)$. 
In terms of $O$ and $N$, \eqref{eq:distribution-matching} is
\begin{equation} \label{eq:objs1}
    (\hat \theta , \hat \alpha) = \argmax_{(\theta, \alpha)} \left\{ O(\theta, \alpha)\mid  N(\theta, \alpha) = 1 \right\}.
\end{equation}
Let $F_1 \triangleq \{ (\theta, \alpha)\mid  N(\theta, \alpha) = 1 \}$ be the feasible set. We introduce a change of variables:
\[
c(\theta , \alpha') = (\theta, \alpha(\theta, \alpha'))  ~ \text{ where } ~ \alpha(\theta, \alpha') = \alpha' - 1_K\log(N(\theta, \alpha')).
\] Note that for any $(\theta , \alpha') \in \reals^q,~ c(\theta , \alpha') \in F_1$ because  it holds: 
\[
\begin{aligned}
N(c(\theta, \alpha')) & =  N(\theta, \alpha' - \log(N(\theta, \alpha')) \times 1_K)\\
& = \Ex_{\hP} \big[ \exp(\theta_Y^\top T(X) + \alpha_Y' - \log(N(\theta, \alpha'))) \big]\\
& = \frac{\Ex_{\hP} \big[ \exp(\theta_Y^\top T(X) + \alpha_Y' ) \big]}{N(\theta, \alpha')} = 1\,.
\end{aligned}
\] and the objective value changes to 
\[
\begin{aligned}
O(c(\theta, \alpha')) & = \Ex_{\hQ_X}\left[\textstyle\log\Big\{\sum_{k=1}^K\heta_{P,k}(X)\exp[\theta_k^\top T(X) + \alpha_k'  - \log(N(\theta, \alpha')]\Big\}\right] \\
& = \Ex_{\hQ_X}\left[\textstyle\log\Big\{\sum_{k=1}^K\heta_{P,k}(X)\exp[\theta_k^\top T(X) + \alpha_k'  ]\Big\}\right] - \log(N(\theta, \alpha'))\\
& = O(\theta, \alpha')  - \log(N(\theta, \alpha'))\,.
\end{aligned}
\] 
Defining $F_2 = \{c(\theta, \alpha')\mid (\theta, \alpha')\in \reals^q\}$ we see that $F_1 = F_2$, whose argument follows.  We first notice that $F_2 \subset F_1$ since for any $(\theta , \alpha')\in \reals^q $ it holds $ c(\theta , \alpha') \in F_1$. We also notice that $F_1 \subset F_2$. This is argued by noticing the following: if $(\theta, \alpha') \in F_1$ then $N(\theta,\alpha') = 1$ which implies $c(\theta, \alpha') = (\theta, \alpha')$. 

Here, we summarize the crux of the proof. 
Though there are multiple $(\theta, \alpha')$ that produces the same value of $c(\theta, \alpha')$, each of these $(\theta, \alpha')$'s produce the same value for the objective
\[
O(\theta, \alpha')  - \log(N(\theta, \alpha')) = O(c(\theta, \alpha'))\,,
\] and $c(\theta, \alpha')$ always satisfy the constraint. So, the optimal point $(\theta, \alpha)$ of \eqref{eq:objs1} corresponds to multiple $(\theta, \alpha')$'s and each of them maximizes $O(\theta, \alpha')  - \log(N(\theta, \alpha'))$. Furthermore, we obtain the optimal  $(\theta, \alpha)$ from any of $(\theta, \alpha')$ (which optimizes $O(\theta, \alpha')  - \log(N(\theta, \alpha'))$) using the transformation $c(\theta, \alpha')$. 

The mathematical description of the change of variable follows.
With the change of variable we rewrite \eqref{eq:objs1} as
\[
\begin{aligned}
& (\hat \theta , \hat \alpha) =  c(\hat \theta , \hat \alpha'), ~ (\hat \theta , \hat \alpha')= \argmax_{(\theta, \alpha')} \left\{ O(c(\theta, \alpha'))\mid  N(c(\theta, \alpha')) = 1, (\theta, \alpha')\in \reals^q  \right\}\\
\text{or }& (\hat \theta , \hat \alpha) =  c(\hat \theta , \hat \alpha'), ~ (\hat \theta , \hat \alpha')  = \argmax_{(\theta, \alpha')} \left\{ O(\theta, \alpha') - \log(N(\theta, \alpha')) \mid  (\theta, \alpha') \in \reals^q \right\}
\end{aligned}
\] where the constraint disappear because $N(c(\theta, \alpha')) = 1$ for any $(\theta, \alpha') \in \reals^q$. This completes the proof. 

% \begin{equation}
%     \max\left\{ O(\theta, \alpha')  - \log(N(\theta, \alpha'))\mid  (\theta, \alpha') \right\}
% \end{equation}
% The constraint disappear because  such a change of variable always satisfy the constraint. So, the set $F_2 = \{c(\theta, \alpha')\mid (\theta, \alpha')\}$ is a 

%%%%%%%%%%%%%%%%%%%%%%%%%%%%%%%%%%%%%%%%%%
%%%%%%%%%% PROOF OF ANCHOR POINTS %%%%%%%%
%%%%%%%%%%%%%%%%%%%%%%%%%%%%%%%%%%%%%%%%%%
\subsection{Proof of proposition \ref{prop:anchor-points}}
\label{supp:prop:anchor-points}

Suppose there are two sets of tilt parameters $(\theta_k, \alpha_k)$'s and $(\eta_k, \beta_k)$'s that satisfy \eqref{eq:distribution-matching}:
\[
q_X\{x\} = \sum_{k=1}^Kp\{x,Y=k\}\exp(\theta_k^\top T(x)+ \alpha_k) = \sum_{k=1}^Kp\{x,Y=k\}\exp(\eta_k^\top T(x) +\beta_k).
\]
For any $x\in\cS_k$, the terms that include $p\{x,Y=l\}$, $l\ne k$ vanish:
\[
p\{x,Y=k\}\exp(\theta_k^\top T(x)+\alpha_k) = p\{x,Y=k\}\exp(\eta_k^\top T(x)+\beta_k).
\]
This implies
\[
\theta_k^\top T(x) + \alpha_k = \eta_k^\top T(x) + \beta_k \text{ for all }x\in\cS_k.
\]
We conclude $\theta_k = \eta_k$ and $\alpha_k = \beta_k$ because $T(\cS_k)$ is  $p$-dimensional, so there are $p$ points $x_1,\dots,x_p\in\cS_k$ such that $T(x_1),\dots,T(x_p)\in\reals^p$ are linearly independent.

%%%%%%%%%%%%%%%%%%%%%%%%%%%%%%%%%%%%%%%%%%
%%%%%%%%%% PROOF OF CONSISTENCY %%%%%%%%%%
%%%%%%%%%%%%%%%%%%%%%%%%%%%%%%%%%%%%%%%%%%

\subsection{Proof of Theorem \ref{thm:tilt-concentration}}

For a probabilistic classifier  $\eta: \cX \to \Delta^K$ and the parameter $\coef = (\coef_1^\top , \dots, \coef_K^\top )^\top$
we define the centered logit function $f:\cX \to \reals^K$  as $f_a(x) = \log\{\eta_a(x)\} - \frac 1K \sum_{b = 1}^K \log\{\eta_b(x)\}$. 
We define the functions $u_k(f, \coef) = \eta_a(x) \exp(\coef_k^\top S(x))$, $u_{\cdot} (f, \coef) = \sum_{k=1}^K u_k(f, \coef)$ and $v_k(f, \coef) = u_k(f, \coef)/u_{\cdot}(f, \coef)$, and notice that the objective is
\begin{equation} \textstyle
    \hat L(f, \coef)   = \Ex_{\hQ_X} [\log\{ u_{\cdot}(f, \coef) \}] - \log\{ \Ex_{\hP}[\exp(\coef_y ^\top S(x))] \}\,,
\end{equation}  whereas the true objective is 
\begin{equation} \textstyle
      L^\star( f, \coef)  = \Ex_{Q_X} [\log\{ u_{\cdot}(f, \coef) \}] - \log\{ \Ex_{P}[\exp(\coef^\top_y S(x))] \}\,.
\end{equation}
We see that the first order optimality conditions in estimating $\hat \coef$ are 
\begin{equation} \textstyle \label{eq:optimality-estimation}
\begin{aligned}
 0 & = \partial_{\coef_a} \hat L(\hat f, \hat \coef)\\
 & = \partial_{\coef_a}\big[ \Ex_{\hQ_X} [\log\{ u_{\cdot}(\hat f, \hat \coef) \}] - \log\{ \Ex_{\hP}[\exp( {\hat \coef_y} ^\top S(x))] \}\big]\\
 & = \Ex_{\hQ_X}\big[ \partial_{\coef_a} \{u_{\cdot}(\hat f, \hat \coef)\}/u_{\cdot}(\hat f, \coef) \big] - \frac{\partial_{\coef_a}\{ \Ex_{\hP}[\exp({\hat \coef_y} ^\top S(x))] \}}{ \Ex_{\hP}[\exp({\hat \coef_y} ^\top S(x))]}\\
 & = \Ex_{\hQ_X} [S(x)v_a(\hat f, \hat \coef)] - \Ex_{\hP}[S(x)\exp({\hat \coef_a} ^\top S(x))\bI\{y = a\}] 
\end{aligned}
  \end{equation} where the last inequality holds because $\Ex_{\hP}[\exp(\hcoef_y ^\top S(x))] = 1$. 
Similarly, the first order optimality condition at truth (for $\coef^\star$) are 
 \begin{equation}
     \textstyle \label{eq:optimality-truth}
     \begin{aligned}
     0 &= \partial_{\coef_a}  L^\star( f^\star,  \coef^\star)\\
     &= \Ex_{Q_X} [S(x)v_a( \sf,  \scoef)] - \Ex_{P}[S(x)\exp({\coef_a^\star} ^\top S(x))\bI\{y = a\}]\\
     &= \Ex_{Q_X} [S(x)v_a( \sf,  \scoef)] - \Ex_{P_X}[S(x)\exp({\coef_a^\star} ^\top S(x))\eta_{P, a}^\star(x)]
     \end{aligned}
 \end{equation} We decompose \ref{eq:optimality-estimation} using the Taylor expansion and obtain: 
 \begin{equation} \textstyle
 \label{eq:optimality-theta-hat}
     0 = \partial_{\coef_a} \hL (\sf, \hcoef) + \langle \hf - \sf, \partial_{f} \partial_{\coef_a} \hL( \tf, \hcoef)\rangle 
 \end{equation} where $\tilde f$ is a function in the bracket $[f^\star, \hat f]$, \ie\ for every $x$, $\tilde f(x)$ is a number between $\hat f(x)$ and $f^\star(x)$
\paragraph{Bound on $\langle \hf - \sf, \partial_{f} \partial_{\coef_a} \hL( \tf, \hcoef)\rangle $:} To bound the term we define $\zeta = \hf - \sf$ and  notice that 
\[
\begin{aligned}
  \langle \zeta, \partial_{f} \partial_{\theta_a} \hL( f, \hcoef)\rangle
 & =\textstyle \sum_{b}\langle \zeta_b, \partial_{f_b} \partial_{\theta_a} \hL( f, \hcoef)\rangle\\
 &=\textstyle \sum_{b}\langle \zeta_b, \partial_{f_b} \{\Ex_{\hQ_X} [S(x)v_a(f, \hat \coef)]\} \rangle \\
 &=\textstyle \sum_{b}\Ex_{\hQ_X}[S(x) \zeta_b(x)  \partial_{f_b}\{v_a(f,  \hat \coef) \} ] \\
 &=\textstyle \sum_{b}\Ex_{\hQ_X}[S(x) \zeta_b(x) v_a(f,  \hcoef) (\delta_{ab} - v_a(f, \hcoef))  ]\,.
\end{aligned}
\] 
% where 
% \[
% \begin{aligned}
%  &  \langle \zeta_b, \partial_{f_b} \{\Ex_{\hQ} [T(x)v_a(f,  \theta)]\} \rangle \\
%  & = \Ex_{\hQ}[T(x) \zeta_b(x)  \partial_{f_b}\{v_a(f,  \theta) \} ]\\
%  & = \Ex_{\hQ}[T(x) \zeta_b(x) v_a(f,  \theta) (\delta_{ab} - v_a(f, \theta))  ]\,,
% \end{aligned}
% \] and 
% \[
% \begin{aligned}
% & \langle \zeta_b, \partial_{f_b} \Big\{\frac{\Ex_{\bbP_m}[T(x)u_a( f,  \theta)]}{\Ex_{\bbP_m}[u_{\cdot} (f,  \theta)]}\Big\} \rangle \\
% & = \frac{1}{\{\Ex_{\bbP_m}[u_{\cdot} (f,  \theta)]\}^2} \Big[\langle \zeta_b, \partial_{f_b} \Ex_{\bbP_m}[T(x)u_a( f,  \theta)] \rangle \times \Ex_{\bbP_m}[u_{\cdot} (f,  \theta)]\\
% & ~~~ - \Ex_{\bbP_m}[T(x)u_a( f,  \theta)] \times \langle \zeta_b, \partial_{f_b} \Ex_{\bbP_m}[u_{\cdot}( f,  \theta)] \rangle\Big]\\
% & = \frac{1}{\{\Ex_{\bbP_m}[u_{\cdot} (f,  \theta)]\}^2} \Big[ \Ex_{\bbP_m}[T(x)\zeta_b(x) u_a( f,  \theta)]  \times \Ex_{\bbP_m}[u_{\cdot} (f,  \theta)] \times \delta_{ab}\\
% & ~~~ - \Ex_{\bbP_m}[T(x)u_a( f,  \theta)] \times  \Ex_{\bbP_m}[\zeta_b(x) u_{b}( f,  \theta)] \Big]
% \end{aligned}
% \] 
The derivative in third equality in the above display is calculated in Lemma \ref{lemma:derivatives}. Here, from Assumption \ref{assump:bounded-covariate} we notice that  $S(x)$ is bounded, \ie, there exists a $c_1>0$ such that $\|S(x)\|_2 \le c_1$ for all $x \in \cX$. This implies the followings:   we have
\[
\begin{aligned} 
\textstyle
\|\Ex_{\hQ_X}[S(x) \zeta_b(x) v_a(f,  \coef) (\delta_{ab} - v_a(f, \coef))  ]\|_2
 \le c_1\Ex_{\hQ_X}[ |\zeta_b(x)| ]
\end{aligned}
\] since $0 \le v_a(f,  \coef) (\delta_{ab} - v_a(f, \coef))\le 1$ and we have 
\[
\begin{aligned}
\textstyle
\|\langle \zeta, \partial_{f} \partial_{\theta_a} \hL( f, \hcoef)\rangle\|_2 \le \sum_b c_1\Ex_{\hQ_X}[ |\zeta_b(x)| ] \le c_1 \sqrt{K} \Ex_{\hQ_X}[ \|\zeta(x)\|_2 ]\,.
\end{aligned}
\]
% (2) we have 
% \[
% \begin{aligned}
% & \frac{1}{\{\Ex_{\bbP_m}[u_{\cdot} (f,  \theta)]\}^2} \Big| \Ex_{\bbP_m}[T(x)\zeta_b(x) u_a( f,  \theta)]  \times \Ex_{\bbP_m}[u_{\cdot} (f,  \theta)] \times \delta_{ab}\\
% & ~~~ - \Ex_{\bbP_m}[T(x)u_a( f,  \theta)] \times  \Ex_{\bbP_m}[\zeta_b(x) u_{b}( f,  \theta)] \Big|\\
% & \le \frac{1}{\{\Ex_{\bbP_m}[u_{\cdot} (f,  \theta)]\}^2} \Big[ c_1\Big\{\sup_{i \in[m]} |\zeta_b(x_{i, P}) |\Big\} \Ex_{\bbP_m}[u_a( f,  \theta)]  \times \Ex_{\bbP_m}[u_{\cdot} (f,  \theta)] \\
% & ~~~ +c_1 \Ex_{\bbP_m}[u_a( f,  \theta)] \times  \Big\{\sup_{i \in[m]} |\zeta_b(x_{i, P}) |\Big\} \times \Ex_{\bbP_m}[u_{b}( f,  \theta)] \Big]\\
% & \le 2c_1 \sup_{i \in[m]} |\zeta_b(x_{i, P}) |
% \end{aligned}
% \]
It follows from Assumption \ref{assmp:source-classifier}: with probability at least $ 1- \delta$ it holds   $\sup_{i \in[n_Q]} \|\hf(x_{Q, i}) - \sf(x_{Q,i})\|_2 \le c_2 r_{n_P} \sqrt{\log(n_Q)\log\{1/\delta\}}$, we conclude that  
\begin{equation}
    |\langle \hf - \sf, \partial_{f} \partial_{\theta_a} \hL( f, \theta)\rangle| \le \sqrt{K} c_1 c_2 r_{n_P} \sqrt{\log(n_Q)\log\{1/\delta\}}
\end{equation} holds with probability at least $ 1- \delta$. 

\paragraph{The term $\partial_{\coef_a} \hL(\sf, \hcoef)$}
We use strong convexity \ref{assmp:strong-convexity} and convergence of the loss that \[\sup_{\xi \in K}|\hat L(f^\star, \coef) -   L^\star(f^\star, \coef)| \stackrel{n_P, n_Q \to \infty}{\longrightarrow} 0\] for any compact $K$ in \citet[Corollary 3.2.3.]{vaart2000Weak} to conclude that $\hcoef \to \scoef$ in probability and hence $\hcoef$ is a consistent estimator for $\scoef$.

Following the consistency of $\hcoef$  we see that for sufficiently large $n_P, n_Q$ we have $\|\hcoef - \scoef\|_2 \le \delta_\coef$ ($\delta_\coef$ is chosen according to Lemma \ref{lemma:lb-normalizer}) with probability at least $ 1 -\delta$ and on the event it holds: $\|\hcoef\|_2 \le \|\scoef\|_2 + \delta_\coef$. We define empirical process 
\begin{equation}\label{eq:emp-1st-derivative}
    Z_{a, n_P, n_Q} = \sup_{\|\coef\|_2\le \|\scoef\|_2 + 1} \|\partial_{\coef_a}\hL(\sf, \coef) - \partial_{\coef_a}\sL(\sf, \coef)\|_2
\end{equation}
for which we shall provide a high probability upper bound. We denote $Z_{a, n_P, n_Q}(\coef) = \partial_{\coef_a}\hL(\sf, \coef) - \partial_{\coef_a}\sL(\sf, \coef)$ and notice that 
\[
\begin{aligned}
 & \partial_{\coef_a} \hL(\sf, \coef) - \partial_{\coef_a}\sL(\sf, \coef)\\
& = \underbrace{\Ex_{\hQ_X} [S(x)v_a(\sf,  \coef)] - \Ex_{Q_X} [S(x)v_a(\sf,  \coef)]}_{\triangleq \bA(\coef)}\\
& ~~~~ \underbrace{ -  \frac{\Ex_{\hP}[S(x)\exp( \coef_a ^\top S(x))\bI\{y = a\}]}{\Ex_{\hP}[\exp( \coef_y ^\top S(x))]} +  \frac{\Ex_{P}[S(x)\exp( \coef_a ^\top S(x))\bI\{y = a\}]}{\Ex_{P}[\exp( \coef_y ^\top S(x))]} }_{\triangleq \bB(\coef)} = \bA(\coef) + \bB(\coef)
\end{aligned}
\]
where to bound $\bA(\coef)$ we notice that $S(x_{Q, i})v_a(\sf,  \coef)$ are \IID\ and bounded by $c_1$ ($\|S(x)v_a(\sf,  \coef)\| \le \|S(x)\|_2 \le c_1$ for all $x\in \cX$) and hence sub-gaussian. We apply Hoeffding's concentration inequality for a sample mean of \IID\  sub-gaussian random variables and obtain a constant $c_2>0$ such that for any $\delta>0$ with probability at least $1 - \delta$ it holds
\[ \textstyle
\bA(\coef) = \Ex_{\hQ_X} [S(x)v_a(\sf,  \coef)] - \Ex_{Q_X} [S(x)v_a(\sf,  \coef)] \le c_1 c_2 \sqrt{\frac{\log (1/\delta)}{n_Q}}\,.
\] Using a chaining argument over an $\ell_2$ ball of radius $\|\scoef\|_2 + \delta_\coef$ we  obtain a uniform bound as the following: there exists a constant $c_2>0$ such that for any $\delta>0$ with probability at least $1 - \delta$ it holds
\begin{equation}
    \textstyle \label{eq:bound.A}
\sup_{\coef: \|\coef \|_2 \le \|\scoef\|_2 + \delta_\coef}\bA(\coef)  \le c_1 c_3  \sqrt{\frac{K(p+1)\log (1/\delta)}{n_Q}}\,.
\end{equation}
To bound $\bB$ we first define 
\[
\begin{aligned}
\bB.1(\coef, n_P) \triangleq & ~\Ex_{\hP}[S(x)\exp( \coef_a ^\top S(x))\bI\{y = a\}] - \Ex_{P}[S(x)\exp( \coef_a ^\top S(x))\bI\{y = a\}]\\
\bB.2(\coef, n_P) \triangleq &~ \Ex_{\hP}[\exp( \coef_a ^\top S(x))] - \Ex_{P}[\exp( \coef_a ^\top S(x))]
\end{aligned}
\]
and  notice that both the random variables $\{ S(x_{P, i})\exp( \coef_a ^\top S(x_{P, i}))\bI\{y_{P, i} = a\} \}_{i = 1}^{n_P}$ and $\{ \exp( \coef_a ^\top S(x_{P, i})) \}_{i = 1}^{n_P}$ are bounded for all $\|\coef \|_2 \le \|\scoef\|_2 + \delta_\coef$. Similarly as before we obtain constant $c_4, c_5 > 0$ such that the following hold with probability at least $1- \delta$:
\begin{equation} \label{eq:sup-con-b1b2}
    \begin{aligned}
 \sup_{\coef: \|\coef \|_2 \le \|\scoef\|_2 + \delta_\coef} \big|\bB.1(\coef, n_P)\big|
& \le\textstyle c_4  \sqrt{\frac{K(p+1)\log (1/\delta)}{n_P}}\\
 \sup_{\coef: \|\coef \|_2 \le \|\scoef\|_2 + \delta_\coef} \big|\bB.2(\coef, n_P)\big|
& \le\textstyle c_5  \sqrt{\frac{K(p+1)\log (1/\delta)}{n_P}}\,.
\end{aligned}
\end{equation}
In Lemma \ref{lemma:lb-normalizer} we notice that
\begin{equation}\label{eq:lb-normalizer}
    \inf_{\coef: \|\coef \|_2 \le \|\scoef\|_2 + \delta_\coef} \Ex_{P}[\exp( \coef_y ^\top S(x))] \ge \frac 12\,.
\end{equation}
Gathering all the inequalities  in $\bB$ we obtain
\[
\begin{aligned}
\bB(\coef) & = -  \frac{\Ex_{\hP}[S(x)\exp( \coef_a ^\top S(x))\bI\{y = a\}]}{\Ex_{\hP}[\exp( \coef_y ^\top S(x))]} +  \frac{\Ex_{P}[S(x)\exp( \coef_a ^\top S(x))\bI\{y = a\}]}{\Ex_{P}[\exp( \coef_y ^\top S(x))]}\\
& = -  \frac{\Ex_{P}[S(x)\exp( \coef_a ^\top S(x))\bI\{y = a\}] +\bB.1(\coef, n_P) }{\Ex_{P}[\exp( \coef_y ^\top S(x))] + \bB.2(\coef, n_P)} +  \frac{\Ex_{P}[S(x)\exp( \coef_a ^\top S(x))\bI\{y = a\}]}{\Ex_{P}[\exp( \coef_y ^\top S(x))]}\\
& = \frac{-\bB.1(\coef, n_P)\Ex_{P}[\exp( \coef_a ^\top S(x))] + \bB.2(\coef, n_P)\Ex_{P}[S(x)\exp( \coef_a ^\top S(x))\bI\{y = a\}]}{\Big\{\Ex_{P}[\exp( \coef_y ^\top S(x))] + \bB.2(\coef, n_P)\Big\}\Ex_{P}[\exp( \coef_y ^\top S(x))]}
\end{aligned}
\] and this implies 
\[
\begin{aligned}
|\bB(\coef)| & \le  \frac{\|\bB.1(\coef, n_P)\|_2\Ex_{P}[\exp( \coef_a ^\top S(x))] + |\bB.2(\coef, n_P)|\|\Ex_{P}[S(x)\exp( \coef_a ^\top S(x))\bI\{y = a\}]\|_2}{\Big\{\Ex_{P}[\exp( \coef_y ^\top S(x))] - |\bB.2(\coef, n_P)|\Big\}\Ex_{P}[\exp( \coef_y ^\top S(x))]}
\end{aligned}
\] where we use \eqref{eq:sup-con-b1b2} and \eqref{eq:lb-normalizer} to obtain a constant $c_6>0$ such that with probability at least $1 - \delta$ it holds
\begin{equation}\textstyle\label{eq:bound.B}
    \sup_{\coef: \|\coef \|_2 \le \|\scoef\|_2 + \delta_\coef} |\bB(\coef)| \le c_6 \sqrt{\frac{K(p+1)\log (1/\delta)}{n_P}}\,.
\end{equation}
We now combine \eqref{eq:bound.A} and \eqref{eq:bound.B} and obtain a constant $c_7>0$ such that with probability at least $1 - 2\delta$ we have 
\begin{equation} \textstyle\label{eq:emp-ub}
    Z_{a, n_P, n_Q} \le c_7 \left\{\sqrt{\frac{K(p+1)\log (1/\delta)}{n_P}} + \sqrt{\frac{K(p+1)\log (1/\delta)}{n_Q}}\right\}\,.
\end{equation}

Returning to the first order optimality condition \eqref{eq:optimality-theta-hat} for estimating $\hcoef$ 
we notice that 
\[
\begin{aligned}
 0 & = \textstyle\sum_{a} (\hcoef_a - \scoef_a)^\top \big\{\partial_{\coef_a} \hL (\sf, \hcoef) + \langle \hf - \sf, \partial_{f} \partial_{\coef_a} \hL( \tf, \hcoef)\rangle\big\}\\
 & = \textstyle\sum_{a} (\hcoef_a - \scoef_a)^\top \partial_{\coef_a} \sL (\sf, \hcoef) + \sum_{a} (\hcoef_a - \scoef_a)^\top  \big\{Z_{a, n_P, n_Q}(\hcoef) + \langle \hf - \sf, \partial_{f} \partial_{\coef_a} \hL( \tf, \hcoef)\rangle\big\}\\
 & = \textstyle(\hcoef - \scoef) ^\top \partial_{\coef} \sL (\sf, \hcoef) + \sum_{a} (\hcoef_a - \scoef_a)^\top  \big\{Z_{a, n_P, n_Q}(\hcoef) + \langle \hf - \sf, \partial_{f} \partial_{\coef_a} \hL( \tf, \hcoef)\rangle\big\}
\end{aligned}
\]

We combine it with the first ordder optimality condition for $\scoef$ \eqref{eq:optimality-truth}
to obtain 
\[
\begin{aligned}
(\hcoef - \scoef) ^\top \big\{\partial_{\coef} \sL (\sf, \hcoef) - \partial_{\coef} \sL (\sf, \scoef)\big\}\\
+ \textstyle\sum_{a} (\hcoef_a - \scoef_a)^\top  \big\{Z_{a, n_P, n_Q}(\hcoef) + \langle \hf - \sf, \partial_{f} \partial_{\coef_a} \hL( \tf, \hcoef)\rangle\big\} = 0
\end{aligned}
\]
which can be rewritten as 
\begin{equation} \label{eq:eqA.24}
  \textstyle  (\hcoef - \scoef) ^\top \big\{\partial_{\coef} \sL (\sf, \hcoef) - \partial_{\coef} \sL (\sf, \scoef)\big\} = - \sum_{a} (\hcoef_a - \scoef_a)^\top  \big\{Z_{a, n_P, n_Q}(\hcoef) + \langle \hf - \sf, \partial_{f} \partial_{\coef_a} \hL( \tf, \hcoef)\rangle\big\}\,.
\end{equation} Using the strong convexity assumption at $\scoef$ we obtain that the left hand side in the above equation is lower bounded as 
\begin{equation} \label{eq:lhs}
    (\hcoef - \scoef) ^\top \big\{\partial_{\coef} \sL (\sf, \hcoef) - \partial_{\coef} \sL (\sf, \scoef)\big\} \ge \strongconvexity \|\hcoef - \scoef\|_2^2 \,.
\end{equation}

Let $\cE$ be the event on which the following hold:
\begin{enumerate}
    \item $\|\hcoef - \scoef\|_2 \le \delta_\coef$, 
    \item $|\langle \hf - \sf, \partial_{f} \partial_{\coef_a} \hL(\tilde  f, \hat  \coef)\rangle| \le \sqrt{K}c_1c_2 r_{n_P} \sqrt{\log(n_Q)\log\{1/\delta\}}$ for all $a$, 
    \item $Z_{a, n_P, n_Q} \le c_7\Big\{\sqrt{\frac{ K(p+1) \log(1/\delta)}{n_P}} + \sqrt{\frac{K(p+1) \log(1/\delta)}{n_Q}}\Big\}$ for all $a$. 
\end{enumerate}
We notice that the event $\cE$ has probability $1 - (2 K + 1)\delta$. 
Under the event there exists a  $c_9>0$ such that  the right hand side in \eqref{eq:eqA.24} is upper bounded as 
\begin{equation} \label{eq:rhs}
    \begin{aligned}
& \textstyle\left | - \sum_{a} (\hcoef_a - \scoef_a)^\top  \big\{Z_{a, n_P, n_Q}(\hcoef) + \langle \hf - \sf, \partial_{f} \partial_{\coef_a} \hL( \tf, \hcoef)\rangle\big\}\right|\\
& \le \textstyle\sum_{a} \|\hcoef_a - \scoef_a\|_2 \Big\{\|Z_{a, n_P, n_Q}(\hcoef)\|_2 + \big\|\langle \hf - \sf, \partial_{f} \partial_{\coef_a} \hL( \tf, \hcoef)\rangle\big\|_2 \Big\}\\
& \le\textstyle \sum_{a} \|\hcoef_a - \scoef_a\|_2 \Big\{Z_{a, n_P, n_Q} + \big\|\langle \hf - \sf, \partial_{f} \partial_{\coef_a} \hL( \tf, \hcoef)\rangle\big\|_2 \Big\}\\
& \le \textstyle \sum_{a} \|\hcoef_a - \scoef_a\|_2 c_{9}  \Big\{r_{n_P} \sqrt{\log(n_Q)\log\{1/\delta\}} + \sqrt{\frac{ K(p+1)\log(1/\delta)}{n_P}} + \sqrt{\frac{K(p+1) \log(1/\delta)}{n_Q}}\Big\} \\
& \le \textstyle c_{9}  \Big\{r_{n_P} \sqrt{\log\{n_Q/\delta\}} + \sqrt{\frac{ K(p+1)\log(1/\delta)}{n_P}} + \sqrt{\frac{K(p+1) \log(1/\delta)}{n_Q}}\Big\} \sqrt{K} \|\hcoef - \scoef\|_2 
\end{aligned}
\end{equation}
 Combining the bounds \eqref{eq:lhs} and \eqref{eq:rhs} for left and right hand sides we obtain a $c_{10}>0$ such that on the event $\cE$ it holds 
 \[\textstyle\|\hcoef - \scoef\|_2 \le c_{10}\Big\{r_{n_P} \sqrt{\log(n_Q)\log\{1/\delta\}} + \sqrt{\frac{K(p+1) \log(1/\delta)}{n_P}} + \sqrt{\frac{K(p+1) \log(1/\delta)}{n_Q}}\Big\}\,. \]
 
 Having a concentration on $\hcoef$ we now notice that \[
 \begin{aligned}
 & \|\hat \weight - \weight^\star\|_{1, P} \\
 & = \textstyle\int \big|\exp(\hcoef_y^\top S(x)) - \exp({\scoef_y}^\top S(x))\big| p_X(x)dx\\
 & = \textstyle\int |(\hcoef_y - \scoef_y)^\top S(x)| \exp(\coef_x) p_X(x)dx
 \end{aligned}
 \]
 where $\coef_x$ is a number between $\hcoef_y^\top S(x)$ and ${\scoef_y}^\top S(x)$. On the event $\cE$ we notice that $\|\hcoef\|_2 \le \|\scoef\|_2 + \delta_\coef$ and hence it holds: $|\hcoef_y^\top S(x)|\le \|\hcoef_y\|_2 \|S(x)\|_2 \le \|\hcoef\|_2 \|S(x)\|_2 \le c_1 (\|\scoef\|_2 + \delta_\coef)$. Furthermore, we notice that $|{\scoef_y}^\top S(x)|\le \|\scoef\|_2 \|S(x)\|_2 \le c_1\|\scoef\|_2$, which implies $|\coef_x| \le c_1 (\|\scoef\|_2 + \delta_\coef)$. Returning to the integral we obtain that on the event $\cE$ it holds: 
 \[
 \begin{aligned}
 &\textstyle \int |(\hcoef_y - \scoef_y)^\top S(x)| \exp(\coef_x) p_X(x)dx\\
 & \le\textstyle \|\hcoef_y - \scoef_y\|_2 \int \|S(x)\|_2\exp\{c_1 (\|\scoef\|_2 + \delta_\coef)\} p_X(x)dx \\
 & \le \|\hcoef - \scoef\|_2 c_1 \exp\{c_1 (\|\scoef\|_2 + \delta_\coef)\}\\
 & \le\textstyle c_{11}\Big\{r_{n_P} \sqrt{\log(n_Q)\log\{1/\delta\}} + \sqrt{\frac{K(p+1) \log(1/\delta)}{n_P}} + \sqrt{\frac{K(p+1) \log(1/\delta)}{n_Q}}\Big\}
 \end{aligned}
 \] for some $c_{11}>0$, which holds with probability at least $ 1- (2K + 1)\delta$. 

\begin{lemma}
\label{lemma:lb-normalizer}
There exists $\delta_\xi>0$ such that \[
\inf_{\coef: \|\coef \|_2 \le \|\scoef\|_2 + \delta_\coef} \Ex_{P}[\exp( \coef_y ^\top S(x))] \ge \frac 12\,.
\]
\end{lemma}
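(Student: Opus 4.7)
The plan is to obtain a uniform pointwise lower bound on $\exp(\coef_y^\top S(x))$ over the entire origin-centered ball $\{\|\coef\|_2 \le \|\scoef\|_2 + \delta_\coef\}$ via Cauchy--Schwarz and the boundedness of $S$, then take expectation under $P$ and choose $\delta_\coef$ small enough so that the resulting bound does not fall below $\frac{1}{2}$. No delicate variational or continuity argument about the strictly convex map $F(\coef) := \Ex_P[\exp(\coef_Y^\top S(X))]$ is needed; the origin-centered (rather than $\scoef$-centered) nature of the ball is handled automatically by a pointwise inequality that depends only on $\|\coef\|_2$, avoiding the pitfall of substituting a $\scoef$-centered neighborhood.

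First I would invoke Assumption \ref{assump:bounded-covariate}, which forces $\cX \subset B(0,M)$, to fix a constant $c_1 > 0$ with $\|S(x)\|_2 \le c_1$ for all $x \in \cX$; this uses that $S(x) = (1, T(x)^\top)^\top$ and that the feature map $T$ is bounded on the bounded set $\cX$ (the same constant $c_1$ appears throughout the proof of Theorem \ref{thm:tilt-concentration}). Next, for any $\coef$ in the ball of radius $R := \|\scoef\|_2 + \delta_\coef$ and any label $y \in [K]$, Cauchy--Schwarz yields $\coef_y^\top S(x) \ge -\|\coef_y\|_2\,\|S(x)\|_2 \ge -R c_1$, hence $\exp(\coef_y^\top S(x)) \ge \exp(-R c_1)$ pointwise in $(x, y)$. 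Taking expectation under $P$ preserves the inequality, so $F(\coef) \ge \exp(-R c_1)$ uniformly over the entire ball. It then suffices to choose $\delta_\coef > 0$ small enough that $\exp\bigl(-(\|\scoef\|_2 + \delta_\coef) c_1\bigr) \ge \tfrac{1}{2}$, i.e., $\delta_\coef \le \log 2 / c_1 - \|\scoef\|_2$.

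The main obstacle is that such a strictly positive $\delta_\coef$ exists only when $\|\scoef\|_2 c_1 < \log 2$, a mild implicit smallness condition on the true tilt parameter relative to the range of $S$. If $\|\scoef\|_2$ were larger, the identical argument would still give the uniform lower bound $\exp(-(\|\scoef\|_2 + \delta_\coef) c_1) > 0$, and one would simply replace the constant $\frac{1}{2}$ in the statement by this smaller positive constant. This substitution is harmless for the downstream use of the lemma in the denominator bound \eqref{eq:bound.B}: Theorem \ref{thm:tilt-concentration} only needs the infimum to be a fixed positive constant so that the concentration error in $\bB.2$ eventually falls below half of that constant for large $n_P$, and the rate of Theorem \ref{thm:tilt-concentration} is unaffected.
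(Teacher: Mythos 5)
Your route is genuinely different from the paper's, and the difference matters for whether the constant $\tfrac12$ is actually attainable. The paper does not lower-bound $\Ex_P[\exp(\coef_Y^\top S(X))]$ from scratch; it perturbs around the anchor $\Ex_P[\exp({\scoef_Y}^\top S(X))] = 1$, which holds because the true tilt parameters normalize $q$ to a probability distribution. Concretely, for $\|\coef - \scoef\|_2 \le \delta_\coef$ a mean-value argument gives $|\Ex_P[\exp(\coef_Y^\top S(X))] - 1| \le \delta_\coef\, c_1 \exp\{c_1(\|\scoef\|_2 + \delta_\coef)\}$, and shrinking $\delta_\coef$ makes this at most $\tfrac12$ \emph{for any} $\scoef$. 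Your Cauchy--Schwarz pointwise bound never uses this normalization identity, which is the one substantive ingredient of the paper's proof, and as a consequence your bound $\exp(-c_1(\|\scoef\|_2+\delta_\coef))$ reaches $\tfrac12$ only under the smallness condition $c_1\|\scoef\|_2 < \log 2$. You flag this honestly, and your observation that any fixed positive constant suffices for the denominator bound in \eqref{eq:bound.B} is correct; but strictly speaking you have proved a weaker statement than the lemma, whereas the paper's perturbation argument proves the stated constant unconditionally.

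There is a compensating point in your favor on the domain of the infimum. The lemma is stated over the origin-centered ball $\{\|\coef\|_2 \le \|\scoef\|_2 + \delta_\coef\}$, and your pointwise bound genuinely covers that set; the paper's proof only controls the $\scoef$-centered ball $\{\|\coef - \scoef\|_2 \le \delta_\coef\}$, which is a strict subset, so the paper's argument does not literally establish the displayed infimum either (over the full origin-centered ball the infimum can indeed drop below $\tfrac12$, e.g.\ when $\coef_Y^\top S(X)$ is uniformly close to $-c_1(\|\scoef\|_2+\delta_\coef) < -\log 2$, so the statement as written is only salvageable with your weaker constant). Since the lemma is invoked at $\coef = \hcoef$ on the event $\|\hcoef - \scoef\|_2 \le \delta_\coef$, the $\scoef$-centered reading is what the theorem actually needs, and there the paper's anchored perturbation argument is the right tool. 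If you want to repair your proof to deliver $\tfrac12$ without a smallness condition, restrict to $\|\coef - \scoef\|_2 \le \delta_\coef$ and add the single missing step $\Ex_P[\exp({\scoef_Y}^\top S(X))] = 1$ before perturbing.
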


\begin{proof}[Proof of Lemma \ref{lemma:lb-normalizer}]
  To establish a bound on $\Ex_{P}[\exp( \coef_y ^\top S(x))] - \Ex_{P}[\exp( {\scoef_y} ^\top S(x))]$ for any $\|\coef - \scoef\|_2 \le \delta_\coef$ we notice that \[
 \begin{aligned}
 & |\Ex_{P}[\exp( \coef_y ^\top S(x))] - \Ex_{P}[\exp( {\scoef_y} ^\top S(x))]| \\
 & = \textstyle\int \big|\exp(\coef_y^\top S(x)) - \exp({\scoef_y}^\top S(x))\big| p_X(x)dx\\
 & = \textstyle\int |(\coef_y - \scoef_y)^\top S(x)| \exp(\coef_x) p_X(x)dx
 \end{aligned}
 \]
 where $\coef_x$ is a number between $\coef_y^\top S(x)$ and ${\scoef_y}^\top S(x)$. We notice that $\|\coef\|_2 \le \|\scoef\|_2 + \delta_\coef$ and hence it holds: $|\coef_y^\top S(x)|\le \|\coef_y\|_2 \|S(x)\|_2 \le \|\coef\|_2 \|S(x)\|_2 \le c_1 (\|\scoef\|_2 + \delta_\coef)$. Furthermore, we notice that $|{\scoef_y}^\top S(x)|\le \|\scoef\|_2 \|S(x)\|_2 \le c_1\|\scoef\|_2$, which implies $|\coef_x| \le c_1 (\|\scoef\|_2 + \delta_\coef)$. Returning to the integral we obtain 
 \[
 \begin{aligned}
 &\textstyle \int |(\coef_y - \scoef_y)^\top S(x)| \exp(\coef_x) p_X(x)dx\\
 & \le\textstyle \|\coef_y - \scoef_y\|_2 \int \|S(x)\|_2\exp\{c_1 (\|\coef\|_2 + \delta_\coef)\} p_X(x)dx \\
 & \le \delta_\coef c_1 \exp\{c_1 (\|\scoef\|_2 + \delta_\coef)\}\,.
 \end{aligned}
 \] We choose $\delta_\coef>0$ small enough such that $\delta_\coef c_1 \exp\{c_1 (\|\scoef\|_2 + \delta_\coef)\} \le 1/2$.  Since $\Ex_{P}[\exp( {\scoef_y} ^\top S(x))] = 1$ we obtain that for any $\coef$ with $\|\coef - \scoef\|_2 \le 1/2$ we have 
 \[
 \begin{aligned}
 \textstyle\Ex_{P}[\exp( \coef_y ^\top S(x))] \ge \Ex_{P}[\exp( {\scoef_y} ^\top S(x))] - |\Ex_{P}[\exp( \coef_y ^\top S(x))] - \Ex_{P}[\exp( {\scoef_y} ^\top S(x))]| \ge \frac 12\,.
 \end{aligned}
 \] This implies the lemma.

\end{proof}

\begin{lemma}[Derivatives] \label{lemma:derivatives}
The following holds:
\begin{enumerate}
    \item $\partial_{\theta_b} u_a(f, \theta) = T(x)u_a(f, \theta) \delta_{a,b}$, $\partial_{\theta_b} u_{\cdot}(f, \theta) = T(x)u_b(f, \theta) $ and $\partial_{\theta_b} v_a(f, \theta) = T(x)v_a(f, \theta)\{\delta_{a, b} - v_b(f, \theta)\}$. 
    \item $\partial_{f_b} \eta_a = \eta_a (\delta_{a, b} - \eta_b)$, $\partial_{f_b} \{u_a(f, \theta)\} = (\delta_{a, b} - \eta_b)u_a(f, \theta)$, $\partial_{f_b} \{u_{\cdot}(f, \theta)\} = u_b(f, \theta) - \eta_b u_{\cdot}(f, \theta)$ and $\partial_{f_b} \{v_a(f, \theta)\} = v_a(f, \theta) (\delta_{a, b} - v_b(f, \theta))$. 
\end{enumerate}
\end{lemma}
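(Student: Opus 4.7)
The statement is a list of partial-derivative identities for the functions
\[
u_a(f,\theta)=\eta_a(x)\exp(\theta_a^\top T(x)+\alpha_a),\qquad u_{\cdot}=\sum_a u_a,\qquad v_a=u_a/u_{\cdot},
\]
where $\eta$ is the softmax-type map determined by the centered logits $f$. My plan is a direct application of the chain and quotient rules, broken into the two parts of the lemma, after one small preliminary observation.

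The preliminary observation is that with the centering convention $f_a(x)=\log\eta_a(x)-\tfrac1K\sum_j\log\eta_j(x)$ we have $\exp(f_a)=\eta_a\exp(-C)$ with $C=\tfrac1K\sum_j\log\eta_j$, so $\sum_c\exp(f_c)=\exp(-C)$ (because $\sum_a\eta_a=1$) and therefore $\eta_a=\exp(f_a)/\sum_c\exp(f_c)$. From this I get the standard softmax Jacobian $\partial_{f_b}\eta_a=\eta_a(\delta_{ab}-\eta_b)$ by a direct quotient-rule computation. (The shift invariance of the softmax means the centering constraint does not affect the formula.)

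For Part 1 (derivatives in $\theta$), I note that $u_a$ depends on $\theta_b$ only through the factor $\exp(\theta_a^\top T(x)+\alpha_a)$, which contributes only when $b=a$; the chain rule immediately gives $\partial_{\theta_b}u_a=T(x)u_a\delta_{ab}$. Summing this identity over $a$ yields $\partial_{\theta_b}u_{\cdot}=T(x)u_b$. For $v_a=u_a/u_{\cdot}$ the quotient rule gives
\[
\partial_{\theta_b}v_a=\frac{(\partial_{\theta_b}u_a)u_{\cdot}-u_a(\partial_{\theta_b}u_{\cdot})}{u_{\cdot}^2}
=\frac{T(x)u_a\delta_{ab}\,u_{\cdot}-u_aT(x)u_b}{u_{\cdot}^2}=T(x)v_a(\delta_{ab}-v_b).
\]

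For Part 2 (derivatives in $f$), I apply the softmax Jacobian from the preliminary step to $u_a=\eta_a\exp(\theta_a^\top T(x)+\alpha_a)$, obtaining $\partial_{f_b}u_a=(\delta_{ab}-\eta_b)u_a$. Summing over $a$, using $\sum_a u_a=u_{\cdot}$ and pulling $\eta_b$ out of the sum, gives $\partial_{f_b}u_{\cdot}=u_b-\eta_b u_{\cdot}$. Finally the quotient rule on $v_a=u_a/u_{\cdot}$ produces
\[
\partial_{f_b}v_a=\frac{(\delta_{ab}-\eta_b)u_a u_{\cdot}-u_a(u_b-\eta_b u_{\cdot})}{u_{\cdot}^2}=v_a(\delta_{ab}-\eta_b)-v_a(v_b-\eta_b)=v_a(\delta_{ab}-v_b),
\]
which completes the list. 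There is no real obstacle here beyond bookkeeping; the only point that could trip one up is the centering constraint on $f$, which is why I dispose of it at the start by verifying that $\eta=\mathrm{softmax}(f)$ still holds.
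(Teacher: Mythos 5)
Your proof is correct and follows essentially the same route as the paper's: direct chain- and quotient-rule computations for the $\theta$- and $f$-derivatives, with the softmax Jacobian $\partial_{f_b}\eta_a=\eta_a(\delta_{ab}-\eta_b)$ as the only nontrivial ingredient. Your preliminary check that the centering convention still yields $\eta_a=\exp(f_a)/\sum_c\exp(f_c)$ is a point the paper leaves implicit, and your final factors $(\delta_{ab}-v_b)$ correctly match the lemma statement (the paper's own concluding display lines contain a harmless typo writing $v_a$ in that second factor).
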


\begin{proof}[Proof of lemma \ref{lemma:derivatives}]
We calculate the derivatives one by one. 
\paragraph{1:} We notice that 
\[
\begin{aligned}
\partial_{\theta_b} u_a(f, \theta)  &= \partial_{\theta_b} \{\eta_a(x) \exp(\theta_a^\top T(x))\}\\
& = \eta_a(x) \exp(\theta_a^\top T(x)) T(x) \delta_{a, b} = T(x)u_a(f, \theta) \delta_{a,b}
\end{aligned}
\] and that 
\[
\textstyle\partial_{\theta_b} u_{\cdot}(f, \theta) = \sum_{a = 1}^K T(x)u_a(f, \theta) \delta_{a,b} = T(x)u_b(f, \theta)
\] which finally implies
\[
\begin{aligned}
\partial_{\theta_b} v_a(f, \theta) &=\textstyle \frac{\partial_{\theta_b} \{u_a(f, \theta)\} u_{\cdot }(f, \theta) - \partial_{\theta_b} \{u_{\cdot}(f, \theta)\} u_{a }(f, \theta) }{\{u_{\cdot} (f, \theta)\}^2}\\
& =\textstyle \frac{T(x)u_a(f, \theta) \delta_{a,b} u_{\cdot }(f, \theta) - T(x)u_b(f, \theta) u_{a }(f, \theta) }{\{u_{\cdot} (f, \theta)\}^2}\\
& = T(x) \{u_a / u_{\cdot}\} \{\delta_{a, b} - u_a / u_{\cdot}\} = T(x) v_a (\delta_{a, b} - v_a)
\end{aligned}
\]

\paragraph{2:} Here \[
\begin{aligned}
\partial_{f_b} \eta_a  = \textstyle\partial_{f_b} \left\{\frac{e^{f_a}}{\sum_{j} e^{f_j}} \right\} = \textstyle\frac{\delta_{a, b} e^{f_a}\sum_{j} e^{f_j} - e^{f_a} e^{f_b} }{\Big\{\sum_{j} e^{f_j}\Big\}^2} = \eta_a (\delta_{a, b} -\eta_b)\,,
\end{aligned}
\] 
\[
\begin{aligned}
\partial_{f_b} \{u_a(f, \theta)\} & = \partial_{f_b} \{\eta_a\} \exp(\theta_a^\top T(x)) \\
& = \eta_a (\delta_{a, b} -\eta_b)\exp(\theta_a^\top T(x)) = (\delta_{a, b} -\eta_b)u_a(f, \theta)\,,
\end{aligned}
\]
\[ \textstyle
\partial_{f_b} \{u_{\cdot}(f, \theta)\}  = \sum_{a} (\delta_{a, b} -\eta_b)u_a(f, \theta) = u_{b}(f, \theta) - \eta_b u_{\cdot}(f, \theta)
\] and finally,
\[
\begin{aligned}
\partial_{f_b} \{v_a(f, \theta)\} & = \frac{ \{\partial_{f_b} u_a(f, \theta)\} u_{\cdot}(f, \theta) -  \{\partial_{f_b} u_{\cdot}(f, \theta)\} u_{a}(f, \theta) }{\{u_{\cdot}(f, \theta)\}^2}\\
& = \frac{ (\delta_{a, b} -\eta_b)u_a(f, \theta) u_{\cdot}(f, \theta) -  \{u_{b}(f, \theta) - \eta_b u_{\cdot}(f, \theta)\} u_{a}(f, \theta) }{\{u_{\cdot}(f, \theta)\}^2}\\
& = (u_a / u_{\cdot}) \{ \delta_{a, b} u_{\cdot} - \cancel{\eta_b u_{\cdot}} - u_a  +\cancel{\eta_b u_{\cdot}}  \}/ u_{\cdot}\\
& =  (u_a / u_{\cdot}) \{ \delta_{a, b}  - (u_a/u_{\cdot})   \} = v_a \{ \delta_{a, b} - v_a \}
\end{aligned}
\]
\end{proof}

\subsection{Proof of Lemma \ref{lemma:generalization-bound}}

We start by decomposing the loss difference in the left hand side of equation \eqref{eq:gen-bound}. 
\begin{equation}
    \begin{aligned}
    \cL_Q(\hat f_{\hat \weight}) - \cL_Q(f^\star) = \cL_P(\hat f, \weight^\star) - \cL_P(f^\star, \weight^\star)\\
    = \textstyle\underbrace{\cL_P(\hat f, \weight^\star) - \cL_{\hP}(\hat f, \weight^\star)}_{(a)} + \underbrace{\cL_{\hP}(\hat f, \weight^\star) - \cL_{\hP}(\hat f, \hat \weight)}_{(b)} + \underbrace{\cL_{\hP}(\hat f, \hat \weight) - \cL_{\hP}( f^*, \hat \weight)}_{\le 0}\\
    \textstyle + \underbrace{\cL_{\hP}( f^*, \hat \weight) - \cL_{\hP}( f^*,  \weight^\star)}_{(c)} + \underbrace{\cL_{\hP}( f^*,  \weight^\star) - \cL_P( f^*,  \weight^\star)}_{(d)}\,,
    \end{aligned}
\end{equation} where we write $\hat f_{\hat \weight} \equiv \hat f$. 

\paragraph{Uniform bound on (a)} To control (a) in \eqref{eq:gen-bound} we establish a concentration bound on the following generalization error
\[
\begin{aligned}
\sup_{f \in \cF} \big\{ \cL_P( f, \weight^\star) - \cL_{\hP}( f, \weight^\star) \big\} \\
=\textstyle \sup_{f \in \cF} \Big\{ \Ex\big[g_f(X, Y)\big] -\frac1 {\nsource} \sum_{i = 1}^{\nsource} g_f(X_{P, i}, Y_{P, i}) \Big\}
\triangleq F(z_{1:\nsource})\,,
\end{aligned}
\] where, for $i \ge 1$ we denote $z_{1:i} = (z_1, \dots, z_i)$ and  $z_i = (X_{P, i}, Y_{P, i})$.  First, we use a modification of McDiarmid concentration inequality to bound $F(z_{1:\nsource})$
in terms of its expectation and a $O(1/\sqrt{\nsource})$ term, as elucidated in the following lemma. 
\begin{lemma}
\label{lemma:mcdiarmid}
There exists a constant $c_1>0$ such that with probability at least $1-\delta$ the following holds
\begin{equation}
\label{eq:conc-mcdirmid}
  \textstyle  F(z_{1:\nsource}) \le \Ex\big[F(z_{1:\nsource})\big] + c_1\sqrt{\frac{\log(1/\delta)}{\nsource}}\,.
\end{equation}
\end{lemma}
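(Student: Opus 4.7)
\textbf{Proof proposal for Lemma \ref{lemma:mcdiarmid}.}
The plan is a direct application of the classical McDiarmid bounded-differences inequality to the function $F(z_{1:\nsource})$, so the only real work is to check (i) that the integrand $g_f(x,y) = \weight^\star(x,y)\ell(f(x),y)$ is uniformly bounded over $f\in\cF$ and $(x,y)\in\cX\times[K]$, and (ii) that $F$ satisfies a bounded-differences condition with constants of order $1/\nsource$.

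First I would bound $\weight^\star$. By Assumption \ref{assump:bounded-covariate}, $\cX\subset B(0,M)$, so the extended sufficient statistic $S(x)=(1,T(x)^\top)^\top$ satisfies $\|S(x)\|_2\le c_0$ for some constant $c_0$ depending on $M$ (as already used in the proof of Theorem \ref{thm:tilt-concentration}). Consequently
\[
\weight^\star(x,y) \;=\; \exp\bigl({\scoef_y}^\top S(x)\bigr) \;\le\; \exp\bigl(c_0\|\scoef\|_2\bigr) \;=:\; W,
\]
which is a finite deterministic constant. Combined with Assumption \ref{assump:bounded-loss} ($|\ell|\le B$), this gives $|g_f(x,y)|\le WB$ for every $f\in\cF$ and every $(x,y)$.

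Next I would verify the bounded-differences property. Fix an index $i\in[\nsource]$ and replace $z_i=(X_{P,i},Y_{P,i})$ by an arbitrary $z_i'=(X_{P,i}',Y_{P,i}')$; denote the perturbed tuple $z_{1:\nsource}'$. For any fixed $f\in\cF$, only the $i$-th term in the empirical average changes, so
\[
\Bigl|\bigl(\Ex[g_f]-\tfrac1{\nsource}\!\sum_j g_f(z_j)\bigr) - \bigl(\Ex[g_f]-\tfrac1{\nsource}\!\sum_j g_f(z_j')\bigr)\Bigr| \;=\; \tfrac{1}{\nsource}\bigl|g_f(z_i)-g_f(z_i')\bigr| \;\le\; \tfrac{2WB}{\nsource}.
\]
Taking the supremum over $f\in\cF$ on both sides (and using the standard fact that $|\sup_f A(f)-\sup_f B(f)|\le \sup_f|A(f)-B(f)|$) yields
\[
\bigl|F(z_{1:\nsource})-F(z_{1:\nsource}')\bigr| \;\le\; \frac{2WB}{\nsource}.
\]
So $F$ has bounded differences with constants $c_i=2WB/\nsource$ and $\sum_i c_i^2 = 4W^2B^2/\nsource$.

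Finally, McDiarmid's inequality (one-sided form) gives, for every $t>0$,
\[
\Pr\bigl(F(z_{1:\nsource})-\Ex[F(z_{1:\nsource})] > t\bigr) \;\le\; \exp\!\Bigl(-\tfrac{2t^2}{\sum_i c_i^2}\Bigr) \;=\; \exp\!\Bigl(-\tfrac{\nsource t^2}{2W^2B^2}\Bigr).
\]
Setting the right-hand side equal to $\delta$ and inverting yields the claim with $c_1 = WB\sqrt{2}$. I expect no real obstacle here: the entire proof is a packaging of standard tools, and the only step that requires mild attention is the uniform boundedness of $\weight^\star$, which reduces cleanly to Assumption \ref{assump:bounded-covariate} via the bound on $\|S(x)\|_2$.
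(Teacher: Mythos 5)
Your proof is correct, but it takes a genuinely different (and more elementary) route than the paper. You establish a uniform bound $\weight^\star(x,y)\le W=\exp(c_0\|\scoef\|_2)$ from the bounded-covariate assumption, deduce $|g_f|\le WB$, verify the bounded-differences condition with constants $2WB/\nsource$, and invoke the off-the-shelf McDiarmid inequality. The paper instead re-derives a McDiarmid-type bound from scratch: it writes $F(z_{1:\nsource})-\Ex[F(z_{1:\nsource})]$ as a sum of Doob martingale increments, bounds each increment by $\tfrac{\|\ell\|_\infty}{\nsource}\bigl(\Ex[w^\star(z_i')]+w^\star(z_i)\bigr)$ via the same $\sup$-difference trick you use, and then controls the moment generating function using only that $w^\star(z_i)$ is sub-Gaussian, finishing with a Chernoff bound. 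The paper's argument is strictly more general — it would survive if $\weight^\star$ were unbounded with sub-Gaussian tails (e.g.\ unbounded $\cX$) — whereas yours leans on Assumption \ref{assump:bounded-covariate} to make the weights genuinely bounded. Under the assumptions actually in force in the paper, your reduction is legitimate and cleaner; the only thing worth flagging is that you are importing Assumption \ref{assump:bounded-covariate} into a lemma whose surrounding statement (Lemma \ref{lemma:generalization-bound}) only explicitly cites the bounded-loss assumption, though the paper's own proof implicitly relies on the same boundedness when it asserts sub-Gaussianity of $w^\star(z_i)$. Both routes yield the claimed $c_1\sqrt{\log(1/\delta)/\nsource}$ rate.
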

Next, we use a symmetrization argument (see \cite[Chapter 2, Lemma 2.3.1]{wellner2013weak} ) to bound the expectation $\Ex\big[F(z_{1:\nsource})\big]$ by the Rademacher complexity of the hypothesis class $\cG$, \ie, 
\begin{equation}
\label{eq:symmetrization}
    \Ex\big[F(z_{1:\nsource})\big] \le 2 \cR_{\nsource} (\cG)\,.
\end{equation}
Combining \eqref{eq:conc-mcdirmid} and \eqref{eq:symmetrization} we obtain
\begin{equation}
  \textstyle  (a) = \cL_P(\hat f, \weight^\star) - \cL_{\hP}(\hat f, \weight^\star) \le 2 \cR_{\nsource}(\cG) +   c_1\sqrt{\frac{\log(1/\delta)}{\nsource}}\,,\label{eq:bound-a}
\end{equation} with probability at least $ 1- \delta$. 

\paragraph{Bound on (b) and (c)} Denoting $z_i = (X_{P, i}, Y_{P, i})$ and $\ell_f(z_i) = \ell(f(X_{P,i}), Y_{P, i})$ we  notice that 
for any $f\in \cF$ we have 
\[
\begin{aligned}
  \textstyle ~~\big|\cL_{\hP}( f, \weight^\star) - \cL_{\hP}( f, \hat \weight)\big|
  & = \textstyle\Big|\frac1{\nsource}\sum_{i = 1}^{\nsource}\big\{ \hat \weight(z_i) - \weight^\star (z_i) \big\} \ell_f(z_i)\Big|\\
  & \le\textstyle \frac1{\nsource}\sum_{i = 1}^{\nsource}\big|\big\{ \hat \weight(z_i) - \weight^\star (z_i) \big\} \ell_f(z_i)\big| \le\textstyle \frac{\|\ell\|_\infty}{\nsource}\sum_{i = 1}^{\nsource}\big| \hat \weight(z_i) - \weight^\star (z_i) \big|\,.
\end{aligned}
\] Since $\hat \weight(z) - \weight^\star (z)$ is a sub-gaussian random variable, we use sub-gaussian concentration to establish that for some constant $c_2>0$ \begin{equation}
  \textstyle  \text{for any }f \in \cF, ~~\big|\cL_{\hP}( f, \weight^\star) - \cL_{\hP}( f, \hat \weight)\big| \le \|\ell\|_\infty \Big\{ \Ex_{z_1}\big[|\hat w(z_1) - w^\star (z_1)|\big] + c_2 \sqrt{\frac{\log(1/\delta)}{\nsource}} \Big\} \label{eq:bound-bc}
\end{equation}
 with probability at least $1 - \delta$. This provides a simultaneous bound (on the same probability event) for both (b) and (c) with $f = \hat f$ and $f = f^\star$. 

\paragraph{Bound on (d)} We note that 
\[
\begin{aligned}
 \textstyle ~~ \cL_{\hP}( f^*,  \weight^\star) - \cL_P( f^*,  \weight^\star) = \frac{1}{\nsource}\sum_{i = 1}^{\nsource} \weight^\star (z_i) \ell_{f^\star} (z_i) - \Ex_P\big\{\weight^\star (z_1) \ell_{f^\star} (z_1)\big\}\,,
\end{aligned}
\] where $\{\weight^\star (z_i) \ell_{f^\star} (z_i)\}_{i = 1}^{\nsource}$ are \IID\ sub-gaussian random variables. Using Hoeffding concentration bound we conclude that there exists a constant $c_3>0$ such that for any $\delta>0$ the following holds with probability at least $1 - \delta$
\begin{equation}
\label{eq:bound-d}
 \textstyle   \frac{1}{\nsource}\sum_{i = 1}^{\nsource} \weight^\star (z_i) \ell_{f^\star} (z_i) - \Ex_P\big\{\weight^\star (z_1) \ell_{f^\star} (z_1)\big\} \le c_3 \sqrt{\frac{\log(1/\delta)}{\nsource}}\,.
\end{equation} 
Finally, using \eqref{eq:bound-a} on (a) (which is true on an event of probability $\ge 1 - \delta$), \eqref{eq:bound-bc} on (b) and (c) (simultaneously true on an event of probability $\ge 1- \delta$), and \eqref{eq:bound-d} on (d) (holds on an event of probability $\ge 1- \delta$) we conclude that with probability at least $1 - 3\delta$ the following holds
\begin{equation}
   \textstyle \cL_Q(\hat f_{\hat w}) -  \cL_Q(f^\star) \le  2 \cR_n(\cG) + \|\ell\|_\infty \cdot \Ex_{z_1}\big[|\hat \weight(z_1) - \weight^\star (z_1)|\big] + c_4 \sqrt{\frac{\log(1/\delta)}{\nsource}} 
\end{equation} where $c_4 = c_1 + \|\ell\|_\infty c_2 + c_3$.

\begin{proof}[Proof of Lemma \ref{lemma:mcdiarmid}] For the simplicity of notations we drop the subscript from $\nsource$ and denote the sample size simply by $n$. 
For $i \le n$ we define $\Ex_{i:n}$ as the expectation with respect to the random variables $z_i, \dots, z_n$, and for $i > n$ we define $\Ex_{i:n}\big[F(z_{1:n})\big] = F(z_{1:n})$  and notice that
\begin{equation}
\label{eq:martingale-sum}
  \textstyle  F(z_{1:n}) - \Ex\big[F(z_{1:n})\big] = \sum_{i = 1}^{n} \Big\{ \Ex_{(i+1):n}\big[F(z_{1:n})\big] - \Ex_{{i:n}}\big[F(z_{1:n})\big] \Big\}\,.
\end{equation}
Here, 
\begin{equation}
\label{eq:martingale-diff}
    \begin{aligned}
& ~~ \Ex_{(i+1):n}\big[F(z_{1:n})\big] - \Ex_{{i:n}}\big[F(z_{1:n})\big]\\ 
& = \Ex_{(i+1):n} \Big\{F(z_{1:n}) - \Ex_{z_i}\big[F(z_{1:n})\big]\Big\}\\
& = \Ex_{(i+1):n} \Big\{F(z_1, \dots, z_{i-1}, z_i, z_{i+1}, \dots, z_n) - \Ex_{z_i'}\big[F(z_1, \dots, z_{i-1}, z_i', z_{i+1}, \dots, z_n)\big]\Big\}\\
& = \Ex_{(i+1):n} \Ex_{z_i'} \Big\{F(z_1, \dots, z_{i-1}, z_i, z_{i+1}, \dots, z_n) - F(z_1, \dots, z_{i-1}, z_i', z_{i+1}, \dots, z_n)\Big\}
\end{aligned}
\end{equation}
 where, $z_i'$ is an \IID\ copy of $z_i$. We notice that 
\begin{equation}
\label{eq:fluctuation-bound}
    \begin{aligned}
& ~~ F(z_1, \dots, z_{i-1}, z_i, z_{i+1}, \dots, z_n) - F(z_1, \dots, z_{i-1}, z_i', z_{i+1}, \dots, z_n) \\
& = \textstyle\sup_{f \in \cF} \Big\{ \Ex\big[g_f(z_1)\big] -\frac1 {n} \sum_{i = 1}^{n} g_f(z_i) \Big\}\\
& ~~~~~~~~- \sup_{f \in \cF} \Big\{ \Ex\big[g_f(z_1)\big] -\frac1 {n} \sum_{i = 1}^{n} g_f(z_i) +\frac 1n g_f(z_i') - \frac1n g_f(z_i) \Big\}\\
& \le \textstyle\sup_{f \in \cF} \Big\{ - \frac 1n g_f(z_i') + \frac1n g_f(z_i) \Big\}
\end{aligned}
\end{equation}
 where the last inequality is obtained by setting $A_f = \Ex\big[g_f(z_1)\big] -\frac1 {n} \sum_{i = 1}^{n} g_f(z_i)$ and $B_f = \frac 1n g_f(z_i') - \frac1n g_f(z_i)$ in the following stream of inequalities
\[
\begin{aligned}
 \sup_{f\in \cF} \{A_f\} - \sup_{f\in \cF} \{A_f+ B_f\} 
& = \textstyle\sup_{f\in \cF} \{A_f+B_f - B_f\} - \sup_{f\in \cF} \{A_f+ B_f\} \\
& \le \textstyle\sup_{f\in \cF} \{A_f+B_f\} + \sup_{f\in \cF} \{-B_f\} - \sup_{f\in \cF} \{A_f+ B_f\}\\
& = \sup_{f\in \cF} \{-B_f\}\,,
\end{aligned}
\]
and that 
\begin{equation}
\label{eq:sup-diff-bound}
   \begin{aligned}
 \textstyle\sup_{f \in \cF} \Big\{ - \frac 1n g_f(z_i') + \frac1n g_f(z_i) \Big\}
& \textstyle\le \frac 1n \Big\{ \sup_{f \in \cF} |g_f(z'_i)| + \sup_{f \in \cF} |g_f(z_i)|\Big\}\\
& = \textstyle\frac 1n \Big\{ w^\star(z_i')\sup_{f \in \cF} |\ell(z_i')| + w^\star(z_i)\sup_{f \in \cF} |\ell(z_i')|\Big\}\\
& \le \textstyle\frac{\|\ell\|_\infty}{n}\big\{ w^\star(z_i') + w^\star(z_i)\big\}\,,
\end{aligned} 
\end{equation} where $\ell(z) = \ell(f(x), y)$. 
We use inequalities \eqref{eq:fluctuation-bound} and \eqref{eq:sup-diff-bound} in \eqref{eq:martingale-diff} and get the following
\begin{equation}
\label{eq:martingale-diff-ub}
\begin{aligned}
 \Ex_{(i+1):n}\big[F(z_{1:n})\big] - \Ex_{{i:n}}\big[F(z_{1:n})\big] &\le \frac{\|\ell\|_\infty}{n}\big\{ \Ex_{z_i'} [w^\star(z_i')] + w^\star(z_i)\big\}\,.
\end{aligned}
   \end{equation}
Now, we use \eqref{eq:martingale-sum} and \eqref{eq:martingale-diff-ub} to bound the moment generating function of $F(z_{1:n}) - \Ex[F(z_{1:n})]$ as seen in the following inequalities. For $\lambda > 0$
\begin{equation}
\label{eq:mgf-bound}
\begin{aligned}
  & ~~   \Ex\Big\{ \exp\Big(\lambda\big\{F(z_{1:n}) - \Ex[F(z_{1:n})]\big\}\Big)\Big\}\\
 & = \Ex\Big\{ \exp\Big(\lambda\sum_{i = 1}^n\big\{\Ex_{(i+1):n}\big[F(z_{1:n})\big] - \Ex_{{i:n}}\big[F(z_{1:n})\big]\big\}\Big)\Big\}\\
 & \le \Ex\Big\{ \exp\Big(\lambda\sum_{i = 1}^n\frac{\|\ell\|_\infty}{n}\big\{ \Ex [w^\star(z_i')] + w^\star(z_i)\big\}\Big)\Big\}\\
 & \le \Ex\Big\{ \exp\Big(\lambda\sum_{i = 1}^n\frac{\|\ell\|_\infty}{n}\big\{ w^\star(z_i')+ w^\star(z_i)\big\}\Big)\Big\}, ~~ \text{since} ~~ e^{\Ex\{X\}} \le \Ex\big\{e^X\big\}\\
 & =  \prod_{i = 1}^n\Ex\Big\{ \exp\Big(\frac{\lambda \|\ell\|_\infty}{n} w^\star(z_i)\Big)\Big\} \Ex\Big\{ \exp\Big(\frac{\lambda \|\ell\|_\infty}{n} w^\star(z_i')\Big)\Big\}\\
 & \le  \prod_{i = 1}^n \exp\Big(\frac{2c\lambda^2 \|\ell\|_\infty^2}{n^2} \Big) = \exp\Big( \frac{2c\lambda^2 \|\ell\|_\infty^2}{n} \Big)\,.
\end{aligned}
\end{equation} 
Following the bound on moment generating function in \eqref{eq:mgf-bound} we get 
\[
\begin{aligned}
  & ~~\Pr\big\{F(z_{1:n}) - \Ex[F(z_{1:n})] > t\big\}\\
  &\le e^{-\lambda t} \Ex\Big\{ \exp\Big(\lambda\big\{F(z_{1:n}) - \Ex[F(z_{1:n})]\big\}\Big)\Big\}\\
  & = \exp\Big(-\lambda t + \frac{2c\lambda^2 \|\ell\|_\infty^2}{n} \Big)\,,
\end{aligned}
\] where letting $\lambda = nt /(4c \|\ell\|_\infty^2)$ we obtain
\[
\Pr\big\{F(z_{1:n}) - \Ex[F(z_{1:n})] > t\big\} \le \exp \Big( -\frac{nt^2}{8c \|\ell\|_\infty^2} \Big)\,,
\] and letting $t = \|\ell\|_\infty\sqrt{8c\log(1/\delta)/n}$ we establish the lemma with $C = \|\ell\|_\infty \sqrt{8c}$.

\end{proof}

\section{Experiment details}

\subsection{Data details}
\label{sup:exp:data}

\subsubsection{\waterbirds}

\paragraph{Data} The training data has 4795 sample points with group-wise sample sizes $\{0: 3498,  1: 184,   2: 56, 3: 1057\}$. We combine the test and the validation data to create test data which has 8192 sample points, and the group vise sample sizes are $\{0: 3189, 1:3187,  2:908,  3:908\}$. The images are embedded into 512 dimensional feature vectors using ResNet18 \citep{he2016Deep} pre-trainined on Imagenet \citep{deng2009imagenet}, which we use as covariates. Recent work suggests that using pre-trained features without additional fine-tuning of the feature extractor is beneficial for out-of-distribution generalization \citep{kumar2022FineTuning,rosenfeld2022domain}.

\paragraph{Source and target domains} For the source domain we use the original training set of images. We consider five different target domains: (1) the target domain with all the groups $g \in \{0, 1, 2, 3\}$ from test data, (2) with groups $g \in \{0, 3\}$ \ie, landbirds on land backgrounds and waterbirds on water backgrounds, (3) with groups $g \in\{ 0, 2\}$, \ie, landbirds on land backgrounds and waterbirds on land backgrounds, (4) with groups $g \in \{1, 3\}$, and (5) with groups $g \in \{1, 2\}$. Note that all of the target domains have landbirds and waterbirds.

\subsubsection{\breeds}

\paragraph{Data} \breeds\ \citep{santurkar2020breeds} is a subpopulation shift benchmark derived from ImageNet \citep{deng2009imagenet}. It uses the class hierarchy to define groups within classes. For example, in the Entity-30 task considered in this experiment, class fruit is represented by strawberry, pineapple, jackfruit, Granny Smith in the source and buckeye, corn, ear, acorn in the target. Each source and target datasets are split into training and test datasets.  In the source domain the training (resp. test)  data has 159037 (resp. 6200) sample points, whereas in the target domain the sample sizes are 148791 (resp. 5800) for training (resp. test) data. There are 30 different classes in both source and target domains. The highest (resp. lowest) class proportion in source training data is 4.9\% (resp. 1.58\%) and in target training data is 4.9\% (resp. 1.53\%). 
Here, the images are embedded using SwAV \citep{caron2020unsupervised}. The embedding is of dimension 2048, which we consider as covariates for our analysis. As in the \waterbirds\ experiment, we do not fine-tune the embedder.

\paragraph{Source and target domains}
In our \breeds\ case study we mix a small amount of labeled target samples into the source data. We mix $\pi$ proportion of labeled target samples  into the source domain  and  evaluate the performance of our method for several mixing proportions ($\pi$). Below we describe the step by step procedure for creating mixed source and target datasets:
\begin{enumerate}
\item In both the source and target domains we combine the training and test datasets.
    \item Let $m$ be the sample size of the combined source data. We add $\lfloor m\pi\rfloor$ many labeled target samples into the source data.
    \item Resulting source and target datasets are then split to create training ($80\%$) and test ($20\%$) data.
\end{enumerate}

\subsection{Model details}
\label{sup:exp:model}

\subsubsection{Implementation for ExTRA}
We describe the implementation details for ExTRA weights in Algorithm \ref{alg:extra}. 
\paragraph{The normalization regularizer $\lambda$} 
 is required to control the value of the normalizer $\hat N_t$. It makes sure that the value of $\hat N_t$ remains close to 1, as the function $x + x^{-1}, x>0$ is minimized at $x = 1$. The regularizer is particularly important when the feature distribution  between source and target data has very little overlap (happens in \breeds\ case study).

\begin{algorithm}
\caption{Exponential Tilt Reweighting Alignment (ExTRA)\label{alg:extra}}
\begin{algorithmic}[1]
\Require 
\begin{itemize}
\item \textbf{Dataset:} labeled source data $\{(X_{P, i}, Y_{P, i})\}_{i = 1}^{\nsource}$ and unlabeled target data $\{X_{Q, i}\}_{i = 1}^{n_Q}$. 
    \item \textbf{Hyperparameters:} learning rate $\eta>0$, batch size $B\in \bbN$,   normalization regularizer $\lambda > 0$. 
    \item \textbf{Probabilistic source classifier:} $\heta_P : \cX \to \Delta^K$.
    \item \textbf{Initial values}: $\{(\hat \theta_{k, 0}, \hat \beta_{k, 0})\}_{k = 1}^K$
\end{itemize}

\State Initialize $\hat \theta_0$ at some value. \Repeat{ $t \ge 0$}
\State sample minibatchs
 $(X_{P,  1}, Y_{P,  1}), \dots, (X_{P,  B}, Y_{P,  B}) \sim \hP$, and 
     $X_{Q, 1}, \dots, X_{Q, B}\sim \hQ_X$
\State Compute loss 
$
\hat L_t = \frac 1 B \sum_{i =1}^B \log\Big\{\sum_{k=1}^K\heta_{P,k}(X_{Q, i})\exp(\htheta_{k, t}^\top T(X_{Q, i}) + \hat \beta_k\Big\} 
$ and normalizer $\hat N_t = \frac 1B\sum_{i = 1}^B \exp\big\{ \htheta_{Y_{P, i}, t}^\top T(X_{P, i}) + \hat \beta_{Y_{P, i}, t} \big\}$
\State Objective $\hat O_t =  - \hat L_t + \log(\hat N_t) + \lambda \hat N_t + \lambda \hat N_t^{-1} $
\State Update $\hat \theta_{k, t+1} \gets \hat \theta_{k, t} - \eta \partial_{\theta_k}\hat O_t\{(\hat \theta_{k, t}, \hat \beta_{k, t}), k = 1, \dots , K\} $ and $\hat \beta_{k, t+1} \gets \hat \beta_{k, t} - \eta \partial_{\beta_k}\hat O_t\{(\hat \theta_{k, t}, \hat \beta_{k, t}), k = 1, \dots , K\} $
\Until{converges}
\State Estimated value $\{(\hat \theta_{k}, \hat \beta_{k})\}_{k = 1}^K$
\State $\hat \alpha _k \gets \hat \beta_k - \log \hat N(\{(\hat \theta_{k}, \hat \beta_{k})\}_{k = 1}^K)$\\

\Return parameters $\{(\hat \theta_{k}, \hat \alpha_{k})\}_{k = 1}^K$ and the weight function $\weight(x, y) = \exp(\hat \theta_y T(x) + \hat \alpha_y)$
\end{algorithmic}
\end{algorithm}

\subsubsection{\waterbirds}

\paragraph{Source classifier $\heta_P$} is a logistic regression model fitted on source data using \texttt{sklearn.linear\_model.LogisticRegression} module with the parameters \{solver = `lbfgs', C = 0.1, tol = 1e-6, max\_iter=500\} and the rest set at their default values. We also use several calibration techniques for the source classifier \citep{shrikumar2019calibration}: (1) no model calibration (none), (2) temperature scaling (TS), (3) bias corrected temperature scaling (BCTS), and (4)  vector scaling (VS). For TS, BCTS and VS  we use the implementation in \citet{shrikumar2019calibration}. 

\paragraph{ExTRA importance weights}
In each iterations of ExTRA importance weight calculations (Algorithm \ref{alg:extra}) for  \waterbirds\ data we fix an initialization of the parameters and compute the parameters for several values of the hyperparameters learning rate $\eta\in \{5\times 10 ^{-4}, 4 \times 10^{-5}\}$, batch size $B = 500 $, epochs $E \in \{100, 200, 400\}$ and source model calibrations \{none, TS, BCTS, VS\}. The details can be found in the supplementary codes. Since there are significant overlap between source and target feature distributions we set $\lambda = 0$. We select the hyperparameter setup that produces the lowest value of the objective $-\hat L + \log (\hat N) $  over the full data $\{(X_{P, i}, Y_{P, i})\}_{i = 1}^{\nsource}$ and  $\{X_{Q, i}\}_{i = 1}^{n_Q}$.

\paragraph{Runtime} ExTRA algorithm requires solving a simple stochastic optimization problem. In the \waterbirds\  case-study, 100 epochs of the Adam \citep{kingma2017Adam} optimizer took $11.92 \pm 0.48$ seconds.
% for   and $22.91 \pm 0.81$ seconds for 200 epochs.

\paragraph{Model selection} We considered 120 different models which are logistic regression models of three categories: 
\begin{enumerate}
    \item A vanilla  model.
    \item A model fitted on weighted data to balance the class proportion on source data.
    \item A model fitted on weighted data to balance the group proportions on the source data. 
\end{enumerate}  Each of these models are fitted with scikit-learn logistic regression module where we use 2 different regularizers, $\ell_1$ and $\ell_2$ and 20 different regularization strengths (\texttt{numpy.logspace(-4, -1, 20)}) and we use \texttt{liblinear} solver to fit the models. Rest of the parameters are set at their default values. 

For model selection experiments the source and target data are each split into equal parts to create source and target training and test datasets. The models are fitted using test data on the source domain. We then calculate its (1) ScrVal accuracy using the source training data, (2) ACT-NE accuracy using labeled training data on source and unlabeled training data on target, (3) ExTRA accuracy on training data on source, and finally (4) oracle target accuracy on test data on target. We then summarize the accuracies in (1) oracle \emph{target accuracies} for the models chosen according to the best ScrVal, ACT-NE and ExTRA accuracies on the source domain, and (2) the  \emph{rank correlations} between the ScrVal, ACT-NE, and ExTRA accuracies with the corresponding oracle target accuracies. 
 
\subsubsection{\breeds}

\paragraph{Source classifier $\heta_P$} The source classifier $\heta_P$ used in \breeds\ case study is similar to the one in \waterbirds. It uses same model and parameters to obtain a probabilistic classifier $\heta_P$. We use bias corrected temperature scaling (BCTS) \citep{shrikumar2019calibration} for calibrating $\heta_P$. 

\paragraph{ExTRA importance weights} We set the hyperparameters at some fixed values and obtain our weights for several random initializations  of the tilt parameters. The hyperparameter values are: (1) learning rate $\eta = 10^{-4}$, (2) batch size $B = 1500$, (3) number of epochs $\text{epochs} = 500$, and (4) regularization strength for normalizer $\lambda = 10^{-6}$. Rest of the setup is same as in \waterbirds. 

\subsection{Additional results}
\label{sup:exp:results}

\subsubsection{\waterbirds}
\label{sup:exp:waterbirds}

\paragraph{Precision and recall} We report the precision and recall for the weights. They are defined as following: within an $x$ proportion of samples with the highest weights (call this set $A$)

\begin{enumerate}
    \item precision is the proportion of sample points from the groups comprising the target domain in $A$, \ie\  \[\frac{\#\{\text{sample points in }A \text{ with } g \in \{\text{target groups}\}\}}{|A|}\,, \] and,
    \item recall is the ratio between the number of sample points in $A$ that are from the target groups and the total number of points in source data that are from the target groups, \ie \[\text{precision} = \frac{\#\{\text{sample points in }A \text{ with } g \in \{\text{target groups}\}\}}{\#\{\text{sample points in source data}\text{ with } g \in \{\text{target groups}\}\}}\,. \] 
\end{enumerate}
Target domains consisting of a majority and a minority group, \eg\ $\{0,2\}$, are noticeably imbalanced in the source data, thus we also report precision and recall conditioned on the class label (\ie\ treating source and target as consisting of either only landbirds ($y=0$) or only waterbirds ($y=1$) in the aforementioned precision and recall definitions).

We report results for four target domains in Figure \ref{fig:waterbirds-precision-recall-all}. Overall, the ExTRA weights are informative (precision curves have downward trends and the recall curves are above the non-informative baseline, \ie\ solid black lines). We note that for class $y = 0$  for target domain $\{0, 2\}$ and for class $y = 1$ for target domain $\{1, 3\}$ we see that the ExTRA weights are almost non-informative (precision curve is almost flat and the 
recall curve is almost aligned with the baseline).  This is due to the group imbalance within a class. In the example of class $y = 0$ with target domain $\{0, 2\}$,  the groups with $y = 0$ are $g = 0$ and $g = 1$. Since the sample size for $g = 1$ is very small compared to $g = 0$, most of the samples in $y = 0$ class are from the correct group $g =0$ when we consider $\{0, 2\}$ as our target domain, and any weights would have precision-recall curves that are close to the non-informative baseline. Similar behavior is observed for the other example. 
\begin{figure}
    \centering
    \includegraphics[scale = 0.35]{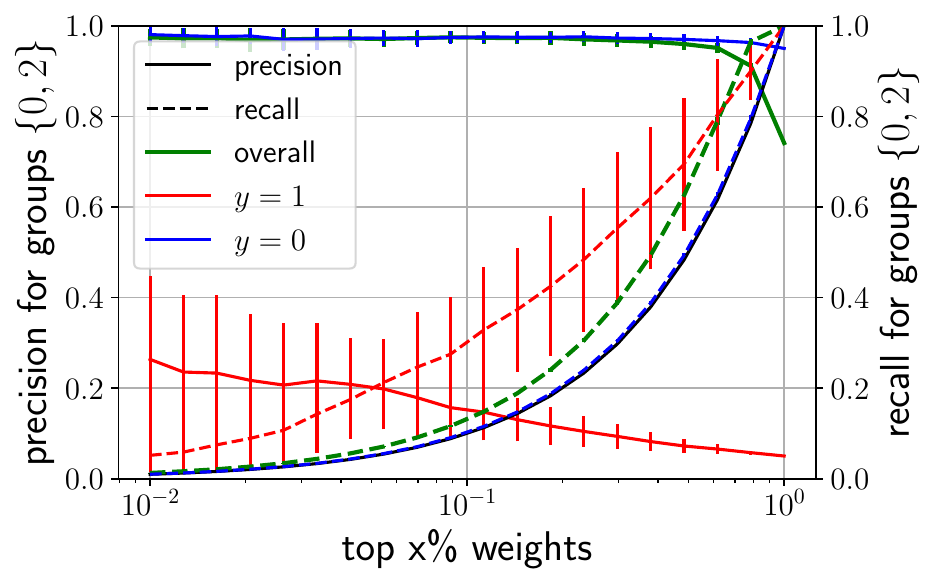} ~ \includegraphics[scale = 0.35]{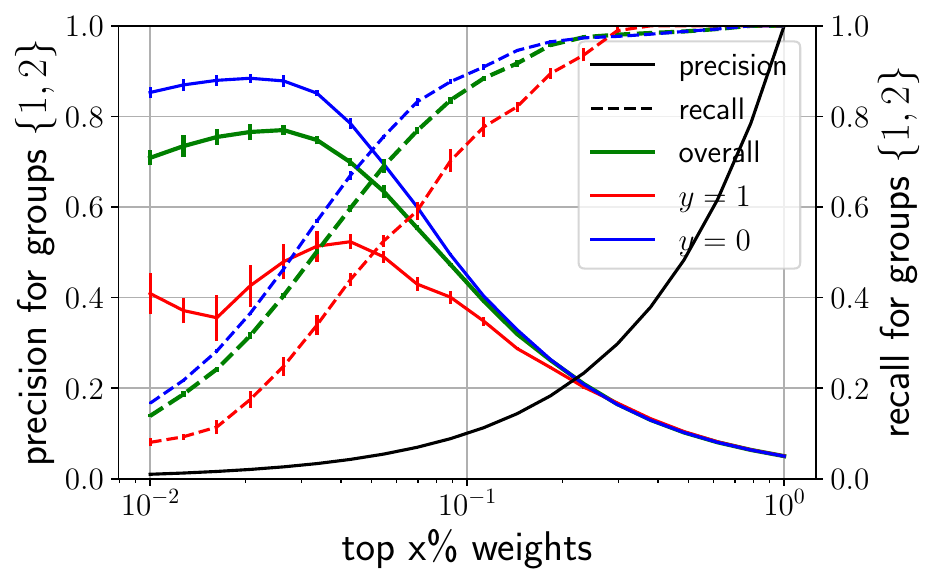}\\
    \includegraphics[scale =
    0.35]{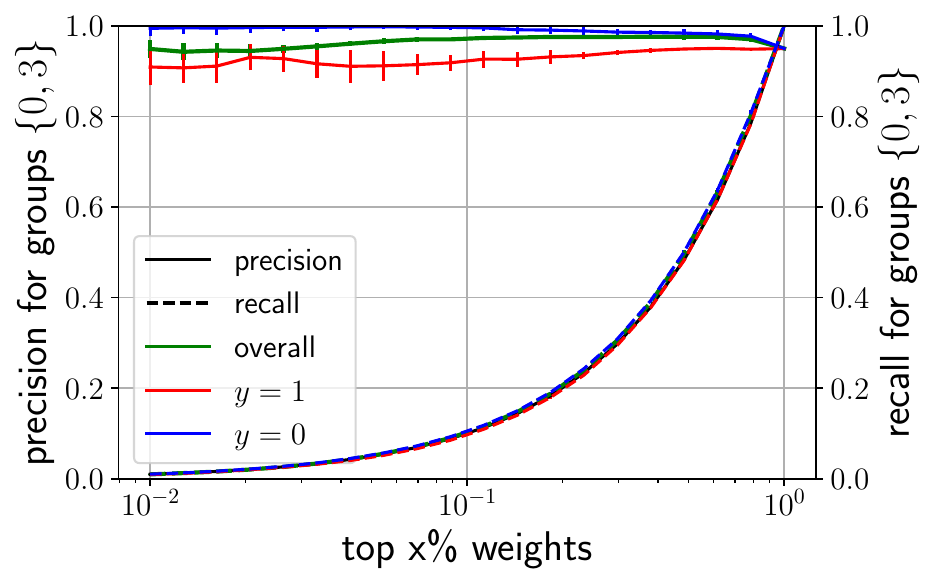} ~ \includegraphics[scale = 0.35]{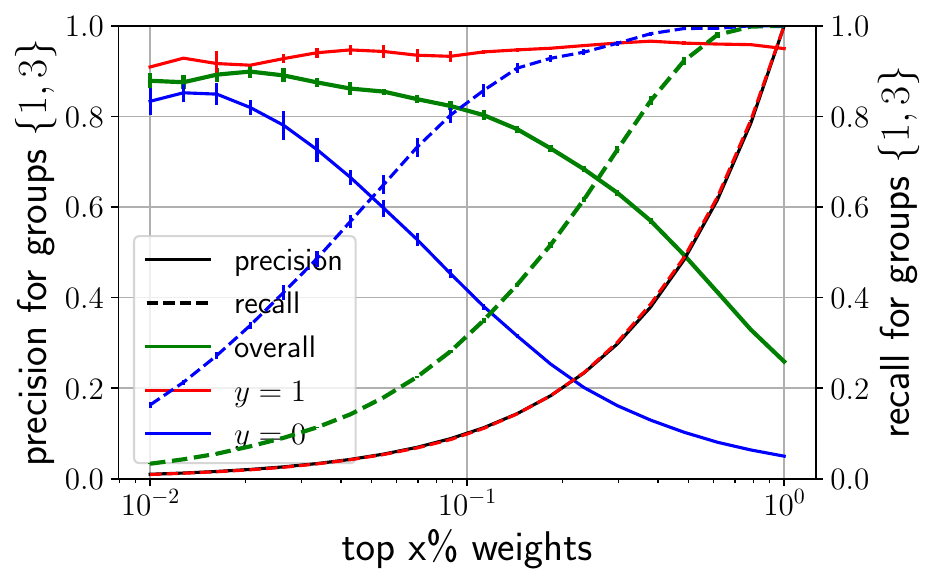}
    \caption{ExTRA precision and recall on \waterbirds\ for different targets. The black solid line refers to a baseline for the recall curve when the weights are completely non-informative of the target domain.}
    \label{fig:waterbirds-precision-recall-all}
\end{figure}

\paragraph{Upweighted images} We visualize images from the \waterbirds\ dataset corresponding to the 16 largest ExTRA weights for the $\{1,2\}$ target domain consisting of the two minority groups. Among these 16 images, 12 correspond to the correct groups, \ie\ either landbirds on water or waterbirds on land. We emphasize that in the source domain there are only 5\% of images corresponding to groups $\{1,2\}$ and ExTRA weights upweigh them as desired. The 4 images from the other groups (highlighted with red border) are: (i) 2nd row, 3rd column (waterbird on water); (ii) 3rd row, 2nd column (waterbird on water); (iii) 4th row, 3rd column (waterbird on water); (iv) 4th row, 4th column (landbird on land). Arguably, the background in (i) is easy to confuse with the land background and the blue sky in (iv) is easy to confuse with the water background, suggesting that these images might be representative of the target domain of interest despite belonging to different groups.

\begin{figure}
    \centering
    \includegraphics[scale=0.3]{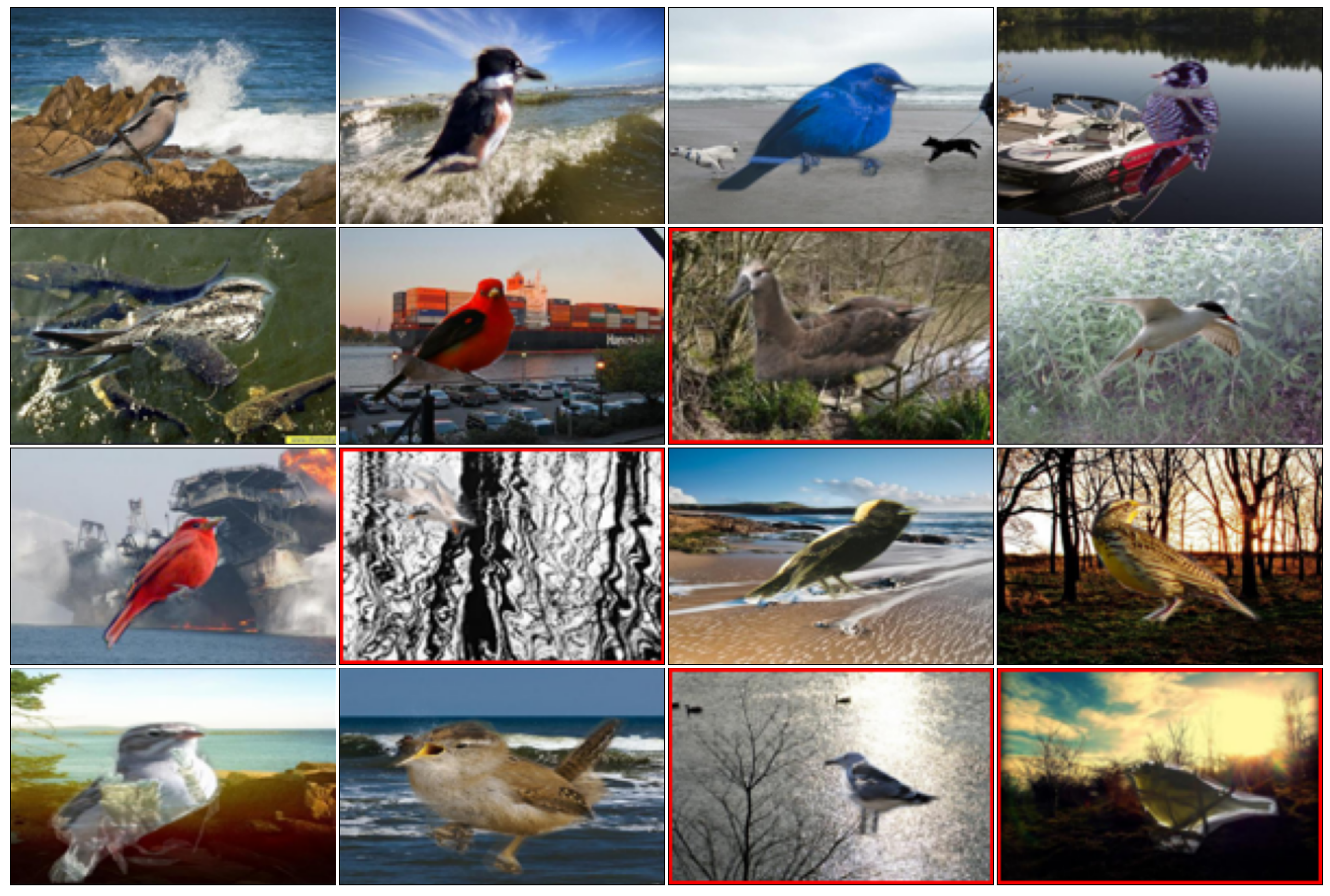}
    \caption{\waterbirds\ images with 16 largest ExTRA weights for $\{1,2\}$ as the target domain. The four images highlighted with red border are from other groups.}
    \label{fig:waterbirds-img}
\end{figure}

\paragraph{Area under the Receiver Operating Characteristic curve} Here we compare the area under the curve for target receiver operating characteristic curves (we call it target AUC-ROC). The behaviors for target AUC-ROC's are similar to target accuracies (Figure \ref{fig:waterbirds}). 
\begin{figure}
    \centering
    \includegraphics[scale = 0.5]{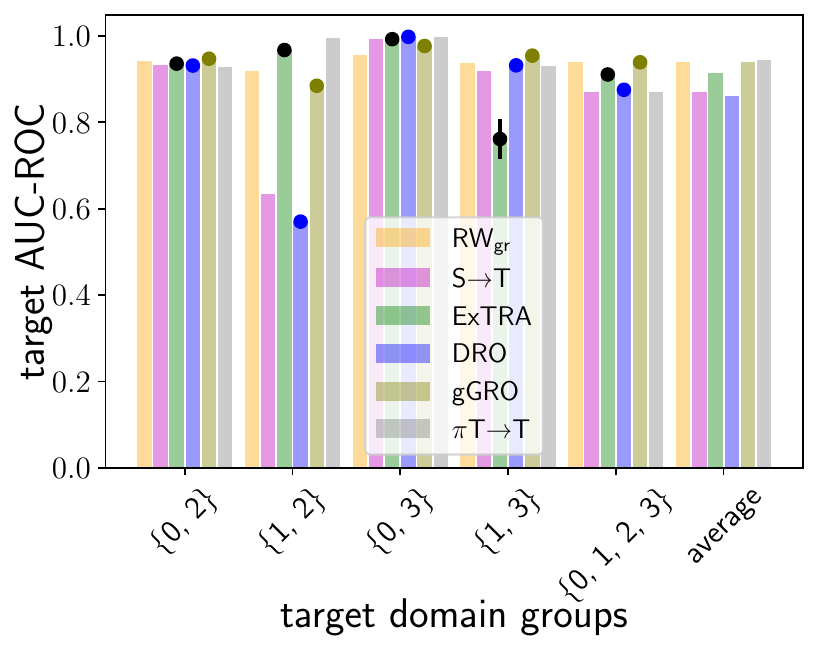}
    \caption{Area under the curve for target receiver operating characteristic curves  for \waterbirds.}
    \label{fig:waterbird-auc}
\end{figure}

\subsubsection{\breeds}
We present precision and recall curves for the target samples identified within the source samples with larger ExTRA weights (analogous to the corresponding \waterbirds\ experiment) in Figure \ref{fig:breeds-precision-recall} for varying mixing proportion $\pi$. In comparison to \waterbirds, we note that both precision and recall are lower, which we think is due to a larger amount of the original source samples representative of the target domain distribution as can be seen from the improved performance of the ExTRA fine-tuned model in Figure \ref{fig:breeds} even when $\pi=0$.

\begin{figure}
    \centering
    \includegraphics[scale = 0.5]{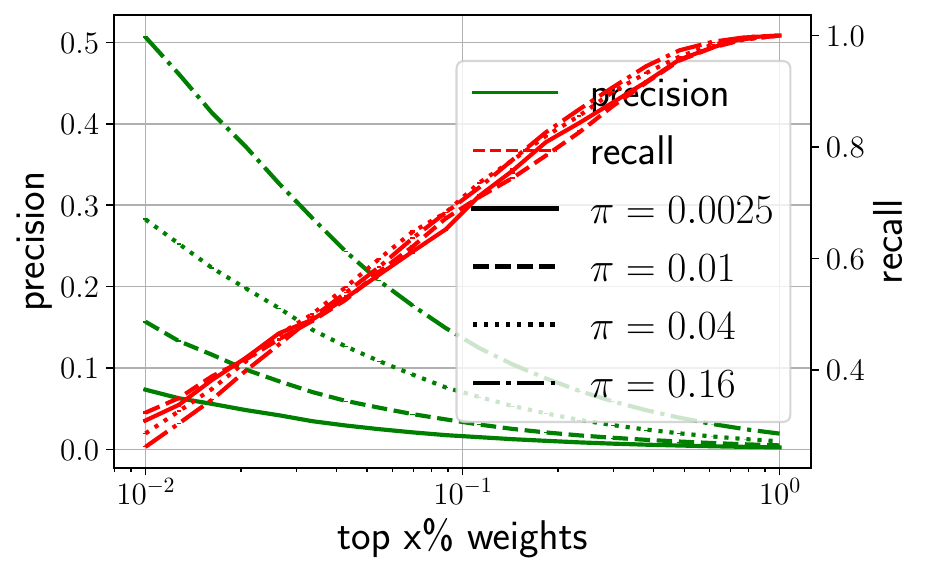}
    \caption{ExTRA precision and recall on \breeds.}
    \label{fig:breeds-precision-recall}
\end{figure}
\section{Synthetic experiment: normal mixture}
\label{sec:normal-mixture}
% \begin{wrapfigure}[15]{r}{0.5\linewidth}
% \vspace{-0.12in}
%     \centering
%     \includegraphics[width=\linewidth]{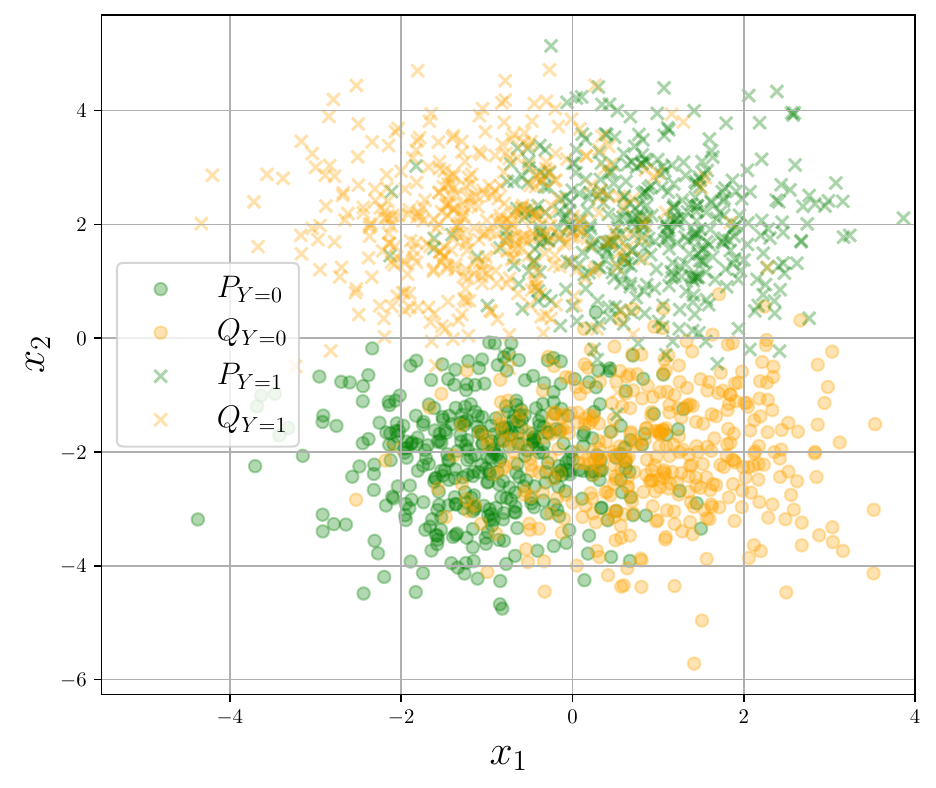}
%     \vspace{-0.23in}
%     \caption{Data for synthetic experiments}
%     \label{fig:LDA-data}
% \end{wrapfigure}

\begin{wrapfigure}[14]{r}{0.5\linewidth}
\vspace{-0.25in}
    \centering
    \includegraphics[width=\linewidth]{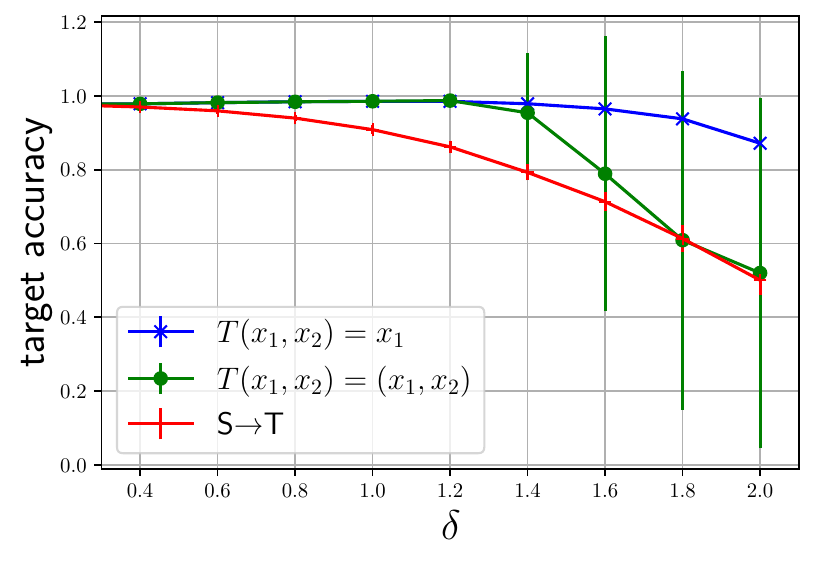}
    \vspace{-0.23in}
    \caption{Target accuracies for different models and for different $\delta$ values. }
    \label{fig:LDA-exptilt}
\end{wrapfigure}

We demonstrate utility of the ExTRA weights for reweighing source data in the following synthetic experiment. As seen in Figure \ref{fig:LDA-data} both the source and target distributions are associated with binary classification tasks and their class conditionals are normally distributed in $\reals^2$. More specifically, the class conditionals in source distribution have means $(-\delta, -2)^\top$ and $(\delta, 2)^\top$ for classes 1 and 0 resp., whereas the means in target distributions are $(\delta, -2)^\top$ and $(-\delta, 2)$. All the class conditionals have variances $\bI_2$. This is an instance of concept drift which does not fall under subpopulation shift (considering classes as subpopulations).

\begin{figure}
    \centering
    \includegraphics[width=0.45\linewidth]{plots/LDA_data.pdf}~\includegraphics[width=0.45\linewidth]{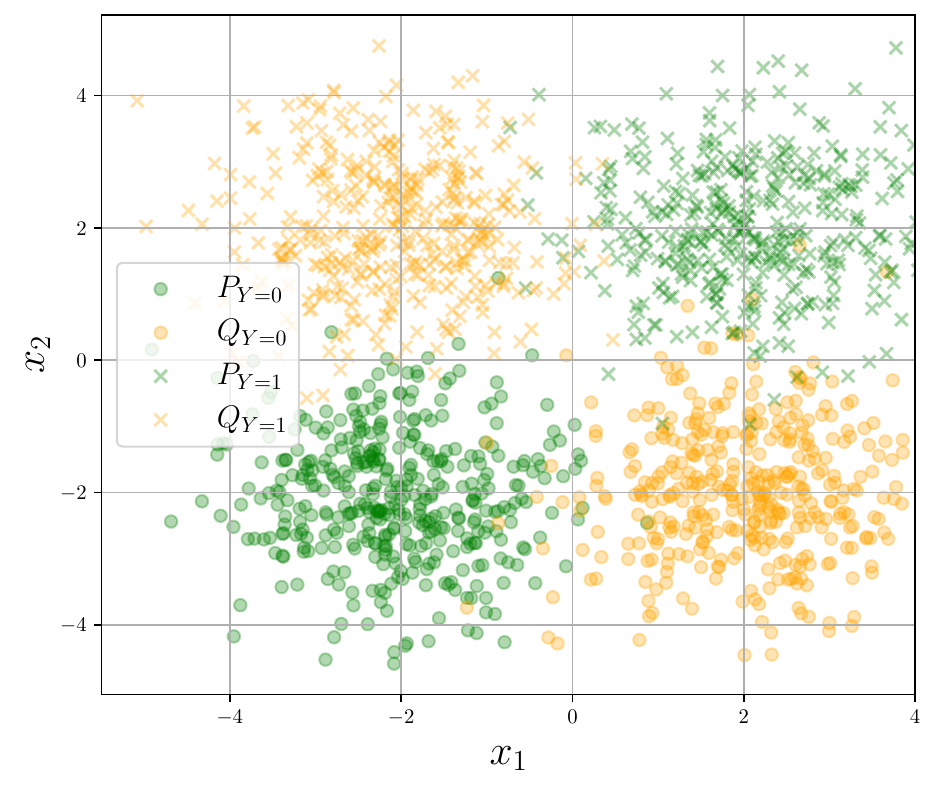}
    \caption{Source and target data with $\delta = 1$ (left) and $\delta = 2$ (right).}
    \label{fig:LDA-data}
\end{figure}

The distribution shift in this example satisfies exponential tilt assumption with sufficient statistic $T(x_1, x_2) = x_1$. In our experiment we use two sufficient statistics: (1) $T(x_1, x_2) = x_1$ - an ideal sufficient statistic for this example and (2) $T(x_1,x_2) = (x_1, x_ 2) $ - a simple choice for a sufficient statistic. 
Here, we vary $\delta$ to control the overlap between the class conditionals in corresponding classes. For a small $\delta$ the source and target class conditionals for class 0 (or 1) have overlapping support, satisfying the anchor-set assumption in Definition \ref{def:anchor-set}.

We compare the target accuracies of three different models trained on: (1) source data (S->T), (2) ExTRA weighted source data using $T(x_1, x_2) = x_1$, and (3) ExTRA weighted source data using $T(x_1, x_2) = (x_1, x_2)$. As we see in Figure \ref{fig:LDA-exptilt}, the ideal sufficient statistic $T(x_1, x_2) = x_1$ produces better target accuracy than the other two models. We also observe that $T(x_1, x_2) = (x_1, x_2)$ has better target accuracy than S->T for small values of $\delta$. The reason for such observation goes back to the anchor set condition in Definition \ref{def:anchor-set}. As we see in Figure \ref{fig:LDA-data}, left plot, a small $\delta$ ($\delta = 1$) results in support overlap between source and target class conditionals for class $0$ (and $1$), and the overlapped region of support works as anchor set. On contrary, large $\delta$'s (Figure \ref{fig:LDA-data}, right plot) has very little overlap, resulting in violation in  the anchor set assumption and leading to a non-identifiability in the exponential tilt model for sufficient statistic $T(x_1, x_2) = (x_1, x_2)$. Nevertheless, the ideal sufficient statistic $T(x_1, x_2) = x_1$ continues to work even for large $\delta$.

\begin{figure}
    \centering
    \includegraphics[width=0.45\linewidth]{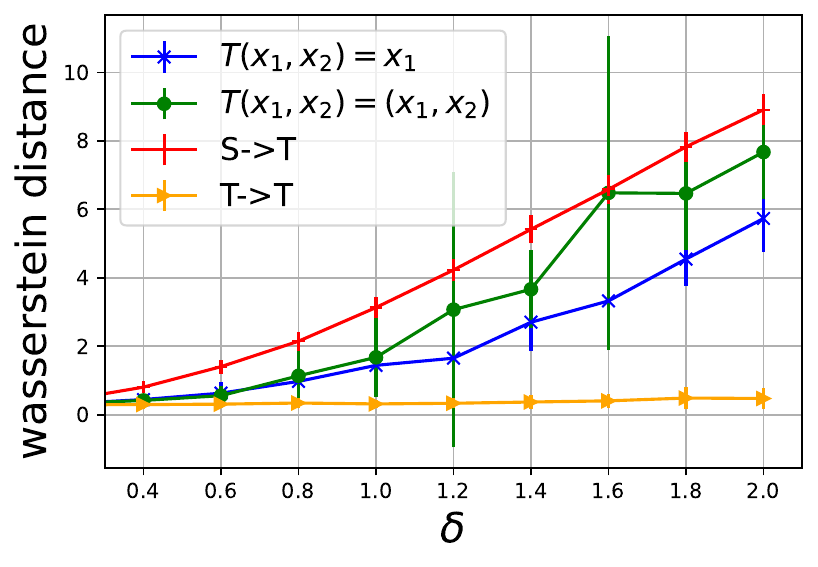}~\includegraphics[width=0.45\linewidth]{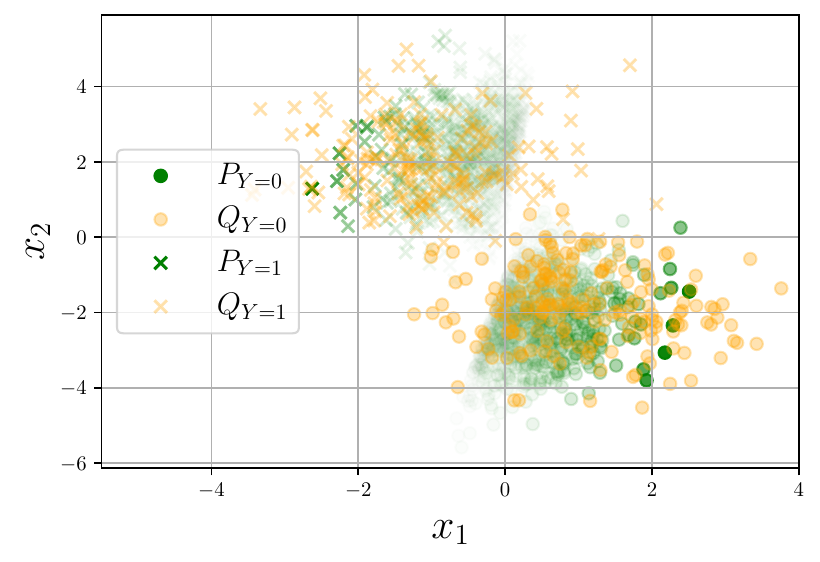}
    \caption{\emph{Left}: Wasserstein distances between source vs target, weighted source vs target and target vs target data. \emph{Right}: Scatter plot for weighted source (highlighted according to the ExTRA weights) and target data. }
    \label{fig:LDA-distribution-matching}
\end{figure}

% \begin{wrapfigure}[15]{r}{0.5\linewidth}
% \vspace{-0.12in}
%     \centering
%     \includegraphics[width=\linewidth]{plots/wd.pdf}
%     \vspace{-0.23in}
%     \caption{}
%     \label{fig:LDA-wd}
% \end{wrapfigure}

We also investigate whether reweighed source data with ExTRA weights better approximates the target data compared to the uniformly weighted source data. We first compare the following Wasserstein distances between target and: (1) source (S->T), (2) target (T->T), (3) ExTRA weighted source with $T(x_1, x_2) = (x_1, x_2)$, and (4) ExTRA weighted source with $T(x_1, x_2) = x_1$ for several values of $\delta$. 
As we see in Figure \ref{fig:LDA-distribution-matching} left plot, ExTRA weighted source data is closer to the target data compared to the uniformly weighted source data. Similar to the target accuracy inspection (Figure \ref{fig:LDA-exptilt}), the sufficient statistic  $T(x_1, x_2) = x_1$ shows better performance in adapting to the target distribution than $T(x_1, x_2) = (x_1, x_2)$. In  Figure \ref{fig:LDA-distribution-matching} right panel, we compare the weighted source data for $\delta = 1$ when the ExTRA weights are calculated with sufficient statistic $T(x_1, x_2) = (x_1, x_2)$. Compared to the uniformly weighted source data (see Figure \ref{fig:LDA-data}, left plot) we see that ExTRA weighted source data is also closer to the target one qualitatively.

\end{document}